\let\MYcaption\@makecaption
\let\@makecaption\MYcaption
\newtheorem{proposition}{Proposition}
\DeclareRobustCommand\onedot{\futurelet\@let@token\@onedot}
\def\@onedot{\ifx\@let@token.\else.\null\fi\xspace}
\def\ie{\emph{i.e}\onedot} 
\def\etc{\emph{etc}\onedot}
\def\etal{et al\onedot}
\def\BibTeX{{\rm B\kern-.05em{\sc i\kern-.025em b}\kern-.08em
    T\kern-.1667em\lower.7ex\hbox{E}\kern-.125emX}}
\DeclareMathOperator{\Real}{Re}
\newcommand{\iter}[2]{#1^{(#2)}}
\newcommand{\twodots}{\mathinner {\ldotp \ldotp}}
\newcommand{\mathC}{\mathbb{C}}
\newcommand{\mathE}{\mathbb{E}}
\newcommand{\mathN}{\mathbb{N}}
\newcommand{\mathR}{\mathbb{R}}
\newcommand{\mathV}{\mathbb{V}}
\newcommand{\mathZ}{\mathbb{Z}}
\newcommand{\MF}{\mathcal F}
\newcommand{\MG}{\mathcal G}
\newcommand{\MH}{\mathcal H}
\newcommand{\ML}{\mathcal L}
\newcommand{\MS}{\mathcal S}
\newcommand{\BA}{\boldsymbol A}
\newcommand{\ba}{\boldsymbol a}
\newcommand{\bb}{\boldsymbol b}
\newcommand{\bk}{\boldsymbol k}
\newcommand{\bn}{\boldsymbol n}
\newcommand{\BBD}{\mathbf D}
\newcommand{\BBV}{\mathbf V}
\newcommand{\BBX}{\mathbf X}
\newcommand{\BBY}{\mathbf Y}
\newcommand{\RMA}{\mathrm{A}}
\newcommand{\RMD}{\mathrm{D}}
\newcommand{\RMV}{\mathrm{V}}
\newcommand{\RMW}{\mathrm{W}}
\newcommand{\RMX}{\mathrm{X}}
\newcommand{\RMY}{\mathrm{Y}}
\newcommand{\RMZ}{\mathrm{Z}}
\newcommand{\balpha}{\boldsymbol \alpha}
\newcommand{\bmu}{\boldsymbol \mu}
\newcommand{\blambda}{\boldsymbol \lambda}
\newcommand{\bvarSigma}{\boldsymbol\varSigma}
\newcommand{\bone}{\boldsymbol 1}
\newcommand{\red}[1]{\textcolor{red}{#1}}
\newcommand{\dtcwpt}{DT-$\mathC$WPT\xspace}
\newcommand{\dtrwpt}{DT-$\mathR$WPT\xspace}
\newcommand{\range}[2]{\left\{#1 \twodots #2\right\}}
\newcommand{\oneto}[1]{\range{1}{#1}}
\newcommand{\interval}[2]{\left[#1,\, #2\right]}
\newcommand{\intervalexclr}[2]{\left[#1,\, #2\right[}
\newcommand{\zeroone}{\interval{0}{1}}
\newcommand{\mpipi}{\interval{-\pi}{\pi}}
\newcommand{\nonzeroset}[1]{#1 \setminus \{0\}}
\newcommand{\nonzeroMathN}{\nonzeroset{\mathN}}
\newcommand{\Expval}{\mathE}
\newcommand{\Var}{\mathV}
\newcommand{\bigO}{O}
\renewcommand{\arraystretch}{1.5} 
\renewcommand{\texttt}[1]{%
	\begingroup
	\ttfamily
	\begingroup\lccode`~=`/\lowercase{\endgroup\def~}{/\discretionary{}{}{}}%
	\begingroup\lccode`~=`[\lowercase{\endgroup\def~}{[\discretionary{}{}{}}%
	\begingroup\lccode`~=`.\lowercase{\endgroup\def~}{.\discretionary{}{}{}}%
	\catcode`/=\active\catcode`[=\active\catcode`.=\active
	\scantokens{#1\noexpand}%
	\endgroup
} 
\newcommand{\norm}[1]{\left\|#1\right\|}
\newcommand{\normtwo}[1]{\norm{#1}_2}
\newcommand{\norminfty}[1]{\norm{#1}_{\infty}}
\newcommand{\bignorm}[1]{\bigl\|#1\bigr\|}
\newcommand{\bignormone}[1]{\bignorm{#1}_1}
\newcommand{\bignormtwo}[1]{\bignorm{#1}_2}
\newcommand{\bignorminfty}[1]{\bignorm{#1}_{\infty}}
\newcommand{\obo}{1 \times 1}
\newcommand{\qqand}{\qquad\mbox{and}\qquad}
\newcommand{\todo}[1][]{%
  	\red{\ifthenelse{\equal{#1}{}}%
    	{[TODO]}%
    	{[TODO: #1]}%
  	}\xspace
}
\newcommand{\rulesep}{\unskip\ \vrule\ }
\DeclareRobustCommand\onedot{\futurelet\@let@token\@onedot}
\def\@onedot{\ifx\@let@token.\else.\null\fi\xspace}
\def\ie{\emph{i.e}\onedot} 
\def\etc{\emph{etc}\onedot}
\def\etal{et al\onedot}
\DeclareMathOperator{\supp}{supp}
\DeclareMathOperator{\relu}{ReLU}
\DeclareMathOperator{\maxpool}{MaxPool}
\DeclareMathOperator{\idxconv}{conv}
\DeclareMathOperator{\idxbn}{bn}
\DeclareMathOperator{\idxbias}{bias}
\DeclareMathOperator{\idxmod}{mod}
\DeclareMathOperator{\idxrelu}{relu}
\DeclareMathOperator{\idxmax}{max}
\DeclareMathOperator{\idxlow}{free}
\DeclareMathOperator{\idxhigh}{gabor}
\DeclareMathOperator{\idxmaxpool}{max}
\DeclareMathOperator{\idxbl}{b}
\DeclareMathOperator{\idxblur}{blur}
\DeclareMathOperator{\idxadablur}{ablur}
\DeclareMathOperator{\idxdtcwpt}{dt}
\DeclareMathOperator{\idxlum}{lum}
\DeclareMathOperator{\idxfilt}{filt}
\DeclareMathOperator{\idxgroup}{g}
\DeclareMathOperator{\idxsum}{s}
\DeclareMathOperator{\idxprod}{p}
\DeclareMathOperator{\idxexp}{e}
\DeclareMathOperator{\idxsoftmax}{sftmx}
\DeclareMathOperator{\idxstd}{std}
\newcommand{\convlayer}{\mbox{Conv}}
\newcommand{\complexConvlayer}{\mathC\convlayer} 
\newcommand{\bnlayer}{\mbox{BN}}
\newcommand{\biaslayer}{\mbox{Bias}}
\newcommand{\subsamplayer}{\mbox{Sub}} 
\newcommand{\blurlayer}{\mbox{Blur}}
\newcommand{\rmaxlayer}{\mathR\mbox{Max}}
\newcommand{\cmodlayer}{\mathC\mbox{Mod}}
\newcommand{\bnzeroerolayer}{\bnlayer 0}
\newcommand{\bnzerolayer}{\bnlayer 0}
\newcommand{\relulayer}{\mbox{ReLU}}
\newcommand{\maxpoollayer}{\mbox{MaxPool}}
\newcommand{\blurpoollayer}{\mbox{BlurPool}}
\newcommand{\adablurpoollayer}{\mbox{ABlurPool}}
\newcommand{\maxlayer}{\mbox{Max}}
\newcommand{\moduluslayer}{\mbox{Modulus}}
\newcommand{\softmaxlayer}{\mbox{Softmax}}
\newcommand{\complexConv}{$\mathC$Conv\xspace}
\newcommand{\rmax}{$\mathR$Max\xspace}
\newcommand{\cmod}{$\mathC$Mod\xspace}
\newcommand{\cmodbn}{BN$0$\xspace}
\newcommand{\wavecnn}{WCNN\xspace}
\newcommand{\cwavecnn}{$\mathC$\wavecnn}
\newcommand{\blurcnn}{BlurCNN\xspace}
\newcommand{\blurwavecnn}{Blur\wavecnn}
\newcommand{\cblurwavecnn}{$\mathC$\blurwavecnn}
\newcommand{\adablurcnn}{A\blurcnn}
\newcommand{\adablurwavecnn}{A\blurwavecnn}
\newcommand{\cadablurwavecnn}{$\mathC$\adablurwavecnn}
\newcommand{\waveblock}{WBlock\xspace}
\newcommand{\cwaveblock}{$\mathC$\waveblock}
\newcommand{\wavecnns}{{\wavecnn}s\xspace}
\newcommand{\cwavecnns}{{\cwavecnn}s\xspace}
\newcommand{\wavealexnet}{WAlexNet\xspace}
\newcommand{\blurwavealexnet}{BlurWAlexNet\xspace}
\newcommand{\cwavealexnet}{$\mathC$\wavealexnet}
\newcommand{\cblurwavealexnet}{$\mathC$\blurwavealexnet}
\newcommand{\waveresnet}{WResNet\xspace}
\newcommand{\cwaveresnet}{$\mathC$\waveresnet}
\newcommand{\xcoord}{$x$\xspace}
\newcommand{\setof}[1]{\oneto{#1}}
\newcommand{\twodseqs}{\MS}
\newcommand{\gaborchannels}{\MG}
\newcommand{\freechannels}{\MF}
\newcommand{\inchannels}{K}
\newcommand{\outchannels}{L}
\newcommand{\sub}{m} 
\newcommand{\imgsize}{N}
\newcommand{\depth}{J}
\newcommand{\ngroups}{Q}
\newcommand{\discretefilterSupportsize}{\kappa}
\newcommand{\outchannelsLow}{\outchannels_{\idxlow}}
\newcommand{\outchannelsHigh}{\outchannels_{\idxhigh}}
\newcommand{\dtchannels}{\inchannels_{\idxdtcwpt}}
\newcommand{\dtchannelsSubset}{\dtchannels'}
\newcommand{\inchannelsGroup}{\inchannels_{\selectGroup}}
\newcommand{\outchannelsGroup}{\outchannels_{\selectGroup}}
\newcommand{\outchannelsPergroup}{\outchannels_{\idxgroup}}
\newcommand{\imgsizebis}{\imgsize'}
\newcommand{\filtersize}{\imgsize_{\idxfilt}}
\newcommand{\blurfiltsize}{\imgsize_{\idxbl}}
\newcommand{\selectInchannel}{k}
\newcommand{\selectOutchannel}{l}
\newcommand{\biasval}{b}
\newcommand{\multval}{a}
\newcommand{\regparam}{\lambda}
\newcommand{\colormixval}{\mu}
\newcommand{\selectGroup}{q}
\newcommand{\featmapmixval}{\alpha}
\newcommand{\biasvalSelect}{\biasval_\selectOutchannel}
\newcommand{\multvalSelect}{\multval_\selectOutchannel}
\newcommand{\colormixvalSelect}{\colormixval_\selectInchannel}
\newcommand{\selectDtchannel}{\selectInchannel}
\newcommand{\featmapmixvalGroup}{\iter{\featmapmixval}{\selectGroup}}
\newcommand{\regparamGroup}{\regparam_\selectGroup}
\newcommand{\computtime}{t}
\newcommand{\Computtime}{T}
\newcommand{\Tensorsize}{S}
\newcommand{\timeSum}{\computtime_{\idxsum}}
\newcommand{\timeProd}{\computtime_{\idxprod}}
\newcommand{\timeExp}{\computtime_{\idxexp}}
\newcommand{\timeMod}{\computtime_{\idxmod}}
\newcommand{\timeReLU}{\computtime_{\idxrelu}}
\newcommand{\timeMax}{\computtime_{\idxmaxpool}}
\newcommand{\TimeConv}{\Computtime_{\idxconv}}
\newcommand{\TimeCConv}{\Computtime_{\mathC\idxconv}}
\newcommand{\TimeBN}{\Computtime_{\idxbn}}
\newcommand{\TimeBias}{\Computtime_{\idxbias}}
\newcommand{\TimeReLU}{\Computtime_{\idxrelu}}
\newcommand{\TimeMax}{\Computtime_{\idxmaxpool}}
\newcommand{\TimeMod}{\Computtime_{\idxmod}}
\newcommand{\TimeBlur}{\Computtime_{\idxblur}}
\newcommand{\TimeAdablur}{\Computtime_{\idxadablur}}
\newcommand{\TensorsizeStd}{\Tensorsize_{\idxstd}}
\newcommand{\TensorsizeBlur}{\Tensorsize_{\idxblur}}
\newcommand{\TensorsizeAdablur}{\Tensorsize_{\idxadablur}}
\newcommand{\TensorsizeCMod}{\Tensorsize_{\idxmod}}
\newcommand{\biasvec}{\bb}
\newcommand{\multvec}{\ba}
\newcommand{\vectorindex}{\bn}
\newcommand{\vectorindexbis}{\bk}
\newcommand{\regparamvec}{\blambda}
\newcommand{\colormixvec}{\bmu}
\newcommand{\permutmatrix}{\bvarSigma}
\newcommand{\featmapmixvec}{\balpha}
\newcommand{\featmapmixmatrix}{\BA}
\newcommand{\featmapmixvecGroup}{\iter{\featmapmixvec}{\selectGroup}}
\newcommand{\featmapmixvecGroupTop}{\featmapmixvec^{(\selectGroup)\top}}
\newcommand{\featmapmixvecGroupSelect}{\featmapmixvecGroup_{\selectOutchannel}}
\newcommand{\featmapmixvecGroupTopSelect}{\featmapmixvecGroupTop_{\selectOutchannel}}
\newcommand{\featmapmixmatrixGroup}{\iter{\featmapmixmatrix}{\selectGroup}}
\newcommand{\inpimg}{\RMX}
\newcommand{\outimg}{\RMY}
\newcommand{\compleximg}{\RMZ}
\newcommand{\weightimg}{\RMV}
\newcommand{\complexWeightimg}{\RMW}
\newcommand{\dtimg}{\RMD}
\newcommand{\actimg}{\RMA}
\newcommand{\inpmultimg}{\BBX}
\newcommand{\outmultimg}{\BBY}
\newcommand{\weightmultimg}{\BBV}
\newcommand{\dtmultimg}{\BBD}
\newcommand{\colormiximg}{\inpimg^{\idxlum}}
\newcommand{\rmaximg}{\outimg^{\idxmax}}
\newcommand{\cmodimg}{\outimg^{\idxmod}}
\newcommand{\actRmaximg}{\actimg^{\idxmax}}
\newcommand{\actCmodimg}{\actimg^{\idxmod}}
\newcommand{\avgWeightimg}{\widetilde\weightimg}
\newcommand{\complexAvgWeightimg}{\widetilde\complexWeightimg}
\newcommand{\complexWeightimgDt}{\complexWeightimg^{\idxdtcwpt}}
\newcommand{\complexWeightimgDtSelect}{\complexWeightimgDt_{\selectDtchannel'}}
\newcommand{\dtimgSelect}{\dtimg_{\selectDtchannel}}
\newcommand{\dtimgPermGroup}{\iter{\dtimg}{\selectGroup}}
\newcommand{\dtimgPermGroupSelect}{\dtimgPermGroup_{\selectInchannel}}
\newcommand{\dtmultimgPerm}{\dtmultimg'}
\newcommand{\dtmultimgPermGroup}{\iter{\dtmultimg}{\selectGroup}}
\newcommand{\complexWeightimgDepth}{\iter{\complexWeightimg}{\depth}}
\newcommand{\complexWeightimgDepthSelect}{\complexWeightimgDepth_{\selectDtchannel}}
\newcommand{\inpimgSelect}{\inpimg_\selectInchannel}
\newcommand{\actimgSelect}{\actimg_\selectOutchannel}
\newcommand{\outimgSelect}{\outimg_\selectOutchannel}
\newcommand{\compleximgSelect}{\compleximg_\selectOutchannel}
\newcommand{\outimgGroup}{\iter{\outimg}{\selectGroup}}
\newcommand{\outimgGroupSelect}{\outimgGroup_\selectOutchannel}
\newcommand{\outmultimgHigh}{\outmultimg^{\idxhigh}}
\newcommand{\outmultimgGroup}{\iter{\outmultimg}{\selectGroup}}
\newcommand{\weightimgSelect}{\weightimg_{\selectOutchannel\selectInchannel}}
\newcommand{\complexWeightimgSelect}{\complexWeightimg_{\selectOutchannel\selectInchannel}}
\newcommand{\avgWeightimgSelect}{\avgWeightimg_\selectOutchannel}
\newcommand{\complexAvgWeightimgSelect}{\complexAvgWeightimg_\selectOutchannel}
\newcommand{\rmaximgSelect}{\rmaximg_\selectOutchannel}
\newcommand{\cmodimgSelect}{\cmodimg_\selectOutchannel}
\newcommand{\actRmaximgSelect}{\actRmaximg_\selectOutchannel}
\newcommand{\actCmodimgSelect}{\actCmodimg_\selectOutchannel}
\newcommand{\empiricalmean}{\Expval}
\newcommand{\empiricalvar}{\Var}
\newcommand{\empiricalmeanSub}{\empiricalmean_\sub}
\newcommand{\empiricalmeanTwosub}{\empiricalmean_{2\sub}}
\newcommand{\empiricalvarSub}{\empiricalvar_\sub}
\newcommand{\selectInchannelInrange}{\selectInchannel \in \setof{\inchannels}}
\newcommand{\selectInchannelInrangeGroup}{\selectInchannel \in \setof{\inchannelsGroup}}
\newcommand{\selectInchannelInRGB}{\selectInchannel \in \setof{3}}
\newcommand{\selectOutchannelInrange}{\selectOutchannel \in \setof{\outchannels}}
\newcommand{\selectOutchannelInrangeGroup}{\selectOutchannel \in \setof{\outchannelsGroup}}
\newcommand{\selectGroupInrange}{\selectGroup \in \setof{\ngroups}}
\newcommand{\selectDtchannelInrange}{\selectDtchannel \in \setof{\dtchannels}}
\newcommand{\maxVectorindex}{\max_{\norminfty{\vectorindexbis} \leq 1}}
\newcommand{\sumof}[2]{\sum_{#1 = 1}^{#2}}
\newcommand{\sumoverVectorindices}{\sum_{\vectorindex \in \mathZ^2}}
\newcommand{\sumoverVectorindicesbis}{\sum_{\vectorindexbis \in \mathZ^2}}
\newcommand{\sumoverInchannels}{\sumof{\selectInchannel}{\inchannels}}
\newcommand{\sumoverInchannelsRGB}{\sumof{\selectInchannel}{3}}
\newcommand{\objfun}{\ML}
\newcommand{\hilberttransf}{\MH}
\newcommand{\objfunbis}{\ML_0}
\newcommand{\inpimgStarWeightimg}{\inpimg \star \weightimg}
\begin{document}

\title{From CNNs to Shift-Invariant Twin Models Based on Complex Wavelets
\thanks{This work has been partially supported by the LabEx PERSYVAL-Lab (ANR-11-LABX-0025-01) funded by the French program Investissement d’avenir, as well as the ANR grant MIAI (ANR-19-P3IA-0003).
Most of the computations presented in this paper were performed using the GRICAD infrastructure
(\url{https://gricad.univ-grenoble-alpes.fr}),
which is supported by Grenoble research communities.}
}

\author{\IEEEauthorblockN{Hubert Leterme$^{\ast\dagger}$, Kévin Polisano$^\ddagger$, Valérie Perrier$^\ddagger$, and Karteek Alahari$^\dagger$}
\IEEEauthorblockA{$^\ast$Normandie Univ, UNICAEN, ENSICAEN, CNRS, GREYC, 14000 Caen, France \\
$^\dagger$Univ.\@ Grenoble Alpes, CNRS, Inria, Grenoble INP, LJK, 38000 Grenoble, France \\
$^\ddagger$Univ.\@ Grenoble Alpes, CNRS, Grenoble INP, LJK, 38000 Grenoble, France \\
E-mail: hubert.leterme@unicaen.fr}}

\maketitle

\begin{abstract}
    We propose a novel method to increase shift invariance and prediction accuracy in convolutional neural networks. Specifically, we replace the first-layer combination ``real-valued convolutions → max pooling'' (RMax) by ``complex-valued convolutions → modulus'' (CMod), which is stable to translations, or shifts. To justify our approach, we claim that CMod and RMax produce comparable outputs when the convolution kernel is band-pass and oriented (Gabor-like filter). In this context, CMod can therefore be considered as a stable alternative to RMax.
    To enforce this property, we constrain the convolution kernels to adopt such a Gabor-like structure. The corresponding architecture is called mathematical twin, because it employs a well-defined mathematical operator to mimic the behavior of the original, freely-trained model. Our approach achieves superior accuracy on ImageNet and CIFAR-10 classification tasks, compared to prior methods based on low-pass filtering. Arguably, our approach's emphasis on retaining high-frequency details contributes to a better balance between shift invariance and information preservation, resulting in improved performance. Furthermore, it has a lower computational cost and memory footprint than concurrent work, making it a promising solution for practical implementation.
\end{abstract}

\begin{IEEEkeywords}
    deep learning, image processing, shift invariance, max pooling, dual-tree complex wavelet packet transform, aliasing
\end{IEEEkeywords}

\section{Introduction}
\label{sec:introduction}

Over the past decade, some progress has been made on understanding the strengths and limitations of convolutional neural networks (CNNs) for computer vision \cite{LeCun2015,Wiatowski2018}. 
The ability of CNNs to embed input images into a feature space with linearly separable decision regions is a key factor to achieve high classification accuracy.
An important property to reach this linear separability is the ability to discard or minimize non-discriminative image components. In particular, feature vectors are expected to be stable with respect to translations \cite{Wiatowski2018}. However, subsampling operations, typically found in convolution and pooling layers, are an important source of instability---a phenomenon known as aliasing \cite{Azulay2019}.
A few approaches have attempted to address this issue.

\paragraph*{Blurpooled CNNs}
Zhang \cite{Zhang2019} proposed to apply a low-pass \emph{blurring} filter before each subsampling operation in CNNs. Specifically,
\begin{enumerate*}
    \item max pooling layers ($\maxlayer \to \subsamplayer$)\footnote{Sub and Conv stand for ``subsampling'' and ``convolution,'' respectively.} are replaced by max-blur pooling ($\maxlayer \to \blurlayer \to \subsamplayer$);
    \item convolution layers followed by ReLU ($\convlayer \to \subsamplayer \to \relulayer$) are blurred before subsampling ($\convlayer \to \relulayer \to \blurlayer \to \subsamplayer$).%
    \footnote{
        ReLU is computed before blurring; otherwise the network would simply perform on low-resolution images.
    }
\end{enumerate*}
The combination $\blurlayer \to \subsamplayer$ is referred to as \emph{blur pooling}. This approach follows a well-known practice called \emph{antialiasing}, which involves low-pass filtering a high-frequency signal before subsampling, in order to avoid artifacts in reconstruction.
Their approach improved the shift invariance as well as the accuracy of CNNs trained on ImageNet and CIFAR-10 datasets. However, this was achieved with a significant loss of information.

A question then arises: is it possible to design a non-destructive method, and if so, does it further improve accuracy? In a more recent work, Zou \etal \cite{Zou2023} tackled this question through an adaptive antialiasing approach, called \emph{adaptive blur pooling}.
Albeit achieving higher prediction accuracy, 
adaptive blur pooling requires additional memory, computational resources, and trainable parameters.

\paragraph*{Proposed Approach}
In this paper, we propose an alternative approach based on complex-valued convolutions, extracting high-frequency features that are stable to translations.
We observed improved accuracy for ImageNet and CIFAR-10 classification, compared to the two antialiasing methods based on blur pooling \cite{Zhang2019,Zou2023}.
Furthermore, our approach offers significant advantages in terms of computational efficiency and memory usage, and does not induce any additional training, unlike adaptive blur pooling.

Our proposed method replaces the first layers of a CNN: $\convlayer \to \subsamplayer \to \biaslayer \to \relulayer \to \maxpoollayer$, which can provably be rewritten as
\begin{equation}
   \convlayer \to \subsamplayer \to \maxpoollayer \to \biaslayer \to \relulayer,
\label{eq:rmaxmodel}
\end{equation}
by the following combination:
\begin{equation}
   \complexConvlayer \to \subsamplayer \to \moduluslayer \to \biaslayer \to \relulayer,
\label{eq:cmodmodel}
\end{equation}
where \complexConv denotes a convolution operator with a complex-valued kernel, whose real and imaginary parts approximately form a 2D Hilbert transform pair \cite{Havlicek1997}. From \eqref{eq:rmaxmodel} and \eqref{eq:cmodmodel}, we introduce the two following operators:
\begin{align}
   \rmaxlayer \;&:\; \convlayer \to \subsamplayer \to \maxpoollayer;
\label{eq:rmaxlayer} \\
   \cmodlayer \;&:\; \complexConvlayer \to \subsamplayer \to \moduluslayer.
\label{eq:cmodlayer}
\end{align}

Our method is motivated by the following theoretical claim. In a recent preprint \cite{Leterme2023}, we proved that
\begin{enumerate*}
    \item \cmod is nearly invariant to translations, if the convolution kernel is band-pass and clearly oriented;
    \item \rmax and \cmod produce comparable outputs, except for some filter frequencies regularly scattered across the Fourier domain.
\end{enumerate*}
We then combined these two properties to establish a stability metric for \rmax as a function of the convolution kernel's frequency vector.
This work was essentially theoretical, with limited experiments conducted on a deterministic model solely based on the dual-tree complex wavelet packet transform (\dtcwpt). However, it lacked applications to tasks such as image classification.
Building upon this theoretical study, in this paper, we consider the \cmod operator as a proxy for \rmax, extracting comparable, yet more stable features.

In compliance with the theory, the \rmax-\cmod substitution is only applied to the output channels associated with oriented band-pass filters, referred to as \emph{Gabor-like kernels}. This kind of structure is known to arise spontaneously in the first layer of CNNs trained on image datasets such as ImageNet \cite{Yosinski2014}. In this paper, we enforce this property by applying additional constraints to the original model. Specifically, a predefined number of convolution kernels are guided to adopt Gabor-like structures, instead of letting the network learn them from scratch. For this purpose, we rely on the dual-tree complex wavelet packet transform (\dtcwpt) \cite{Bayram2008}. Throughout the paper, we refer to this constrained model as a \emph{mathematical twin}, because it employs a well-defined mathematical operator to mimic the behavior of the original model. In this context, replacing \rmax by \cmod is straightforward, since the complex-valued filters are provided by \dtcwpt.

\paragraph*{Other Related Work}
Chaman and Dokmanic \cite{Chaman2021} reached perfect shift invariance by using an adaptive, input-dependent subsampling grid, whereas previous models rely on fixed grids. 
Although this method satisfied shift invariance for integer-pixel translations, it did not address the problem of shift instability for fractional-pixel translations, and therefore falls outside the scope of this paper.

Another aspect of shift invariance in CNNs is related to boundary effects. The fact that CNNs can encode the absolute position of an object in the image by exploiting boundary effects was discovered independently by Islam \etal \cite{Islam2020}, and Kayhan and Gemert \cite{Kayhan2020}.
This phenomenon is left outside the scope of our paper.
Finally, \cite{Biscione2021,Kvinge2022} studied the impact of pretraining on shift invariance and generalizability to out-of-distribution data, without modifying the network architecture.

\section{Proposed Approach}
\label{sec:proposedmodels}

We first describe the general principles of our approach based on complex convolutions. We then present the mathematical twin based on \dtcwpt, and explain how our method has been benchmarked against blur-pooling-based antialiased models.

We represent feature maps with straight capital letters: $\inpimg \in \twodseqs$, where $\twodseqs$ denotes the space of square-summable 2D sequences. Indexing is denoted by square brackets: for any 2D index $\vectorindex \in \mathZ^2$, $\inpimg[\vectorindex] \in \mathR$ or $\mathC$. The cross-correlation between $\inpimg$ and $\weightimg \in \twodseqs$ is defined by $(\inpimgStarWeightimg)[\vectorindex] := \sumoverVectorindicesbis \inpimg[\vectorindex + \vectorindexbis] \, \weightimg[\vectorindexbis]$. The down arrow refers to subsampling: for any $\sub \in \mathN^*$, $(\inpimg \downarrow \sub)[\vectorindex] := \inpimg[\sub\vectorindex]$.

\subsection{Standard Architectures}

A convolution layer with $\inchannels$ input channels, $\outchannels$ output channels and subsampling factor $\sub \in \mathN \setminus \{0\}$ is parameterized by a weight tensor $\weightmultimg := (\weightimgSelect)_{\selectOutchannelInrange,\, \selectInchannelInrange} \in \twodseqs^{\outchannels \times \inchannels}$. For any multichannel input $\inpmultimg := (\inpimgSelect)_{\selectInchannelInrange} \in \twodseqs^\inchannels$, the corresponding output $\outmultimg := (\outimgSelect)_{\selectOutchannelInrange} \in \twodseqs^\outchannels$ is defined such that, for any output channel $\selectOutchannelInrange$,
\begin{equation}
    \outimgSelect := \sumoverInchannels (\inpimgSelect \star \weightimgSelect) \downarrow \sub.
\label{eq:conv}
\end{equation}
For instance, in AlexNet and ResNet, $\inchannels = 3$ (RGB input images), $\outchannels = 64$, and $\sub = 4$ and $2$, respectively. Next, a bias $\biasvec := (\biasval_1,\, \cdots,\, \biasval_\outchannels)^\top \in \mathR^\outchannels$ is applied to $\outmultimg$, which is then transformed through nonlinear ReLU and max pooling operators. The activated outputs satisfy
\begin{equation}
	\actRmaximgSelect := \maxpool \left(\relu(\outimgSelect + \biasvalSelect)\right),
\label{eq:convrelumaxpool}
\end{equation}
where we have defined, for any $\outimg \in \twodseqs$ and any $\vectorindex \in \mathZ^2$,
\begin{align}
    \relu(\outimg)[\vectorindex]
        &:= \max(0,\, \outimg[\vectorindex]);
\label{eq:relu} \\
    \maxpool(\outimg)[\vectorindex]
        &:= \maxVectorindex \outimg[2\vectorindex + \vectorindexbis].
\label{eq:maxpool}
\end{align}

\subsection{Core Principle of our Approach}
\label{subsec:proposedmodels_principle}

We consider the first convolution layer of a CNN, as described in \eqref{eq:conv}. As widely discussed in the literature \cite{Yosinski2014}, after training with ImageNet, a certain number of convolution kernels $\weightimgSelect$ spontaneously take the appearance of oriented waveforms with well-defined frequency and orientation (Gabor-like kernels). A visual representation of trained convolution kernels is provided in \cref{fig:convkernels_wavealexnet}.
\begin{figure*}
    \centering
    \begin{subfigure}{.495\textwidth}
        \includegraphics[width=0.49\textwidth]{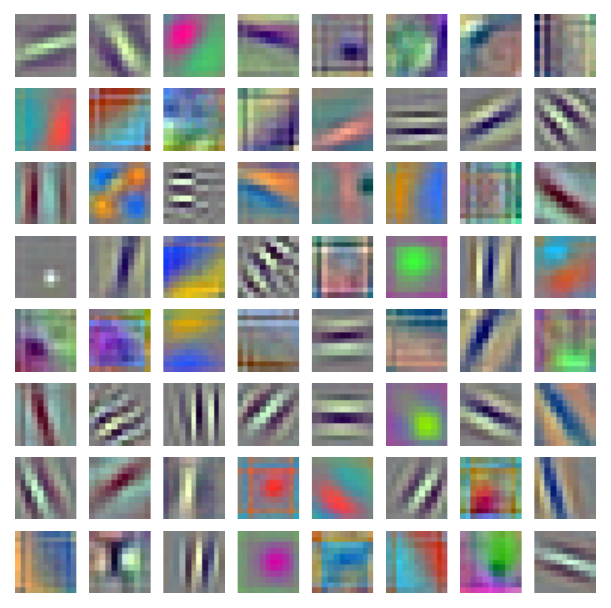}
        \includegraphics[width=0.49\textwidth]{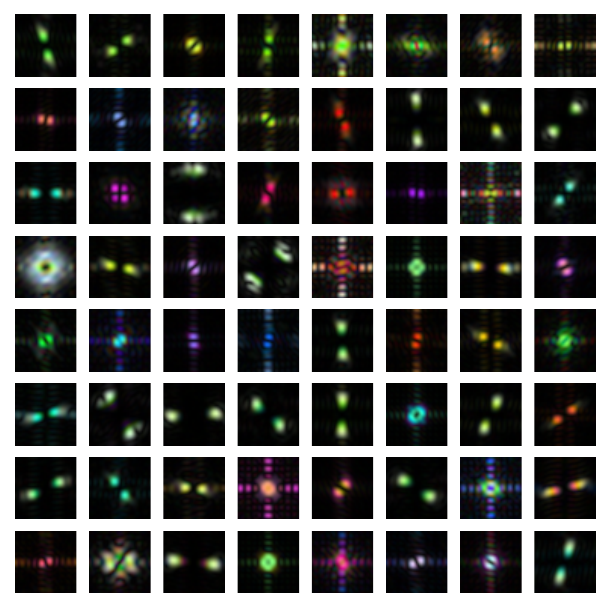}
        \caption{Standard AlexNet}
    \label{subfig:convkernels_ai}
    \end{subfigure}
    \begin{subfigure}{.495\textwidth}
        \includegraphics[width=0.49\textwidth]{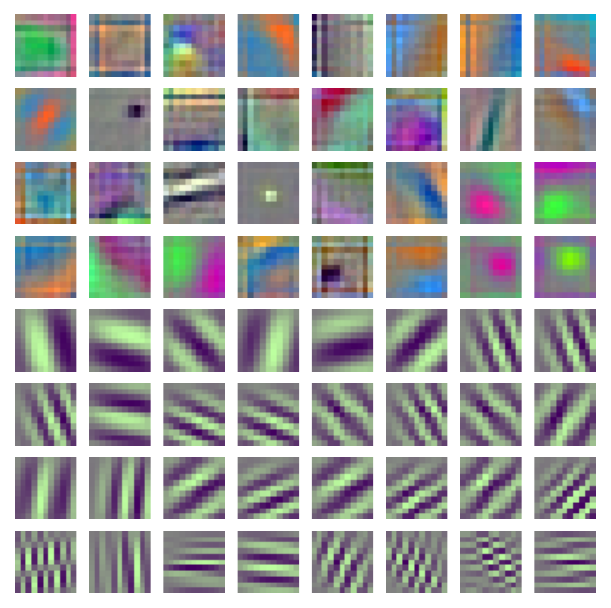}
        \includegraphics[width=0.49\textwidth]{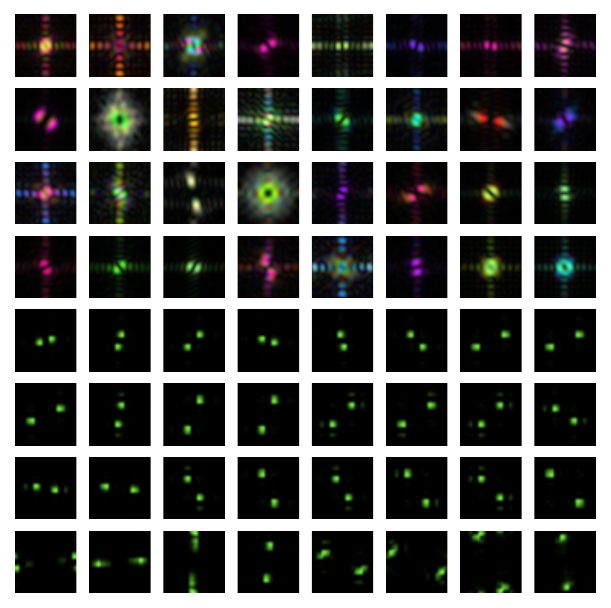}
        \caption{\wavealexnet (\dtcwpt-based twin)}
    \label{subfig:convkernels_awyi}
    \end{subfigure}
    \begin{subfigure}{.495\textwidth}
        \includegraphics[width=0.49\textwidth]{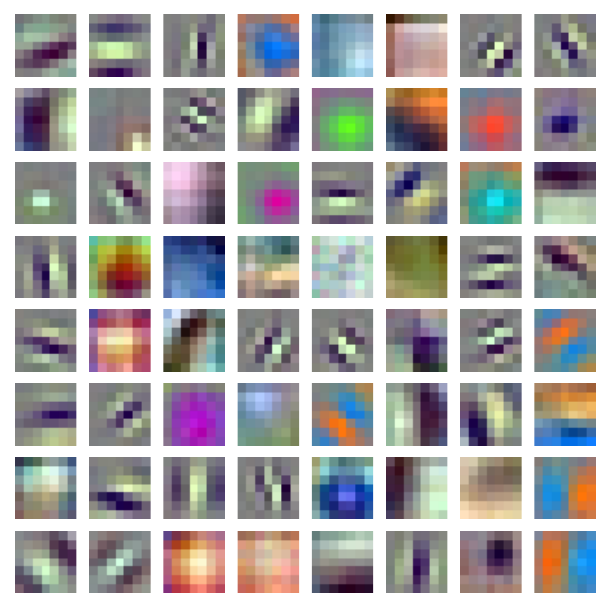}
        \includegraphics[width=0.49\textwidth]{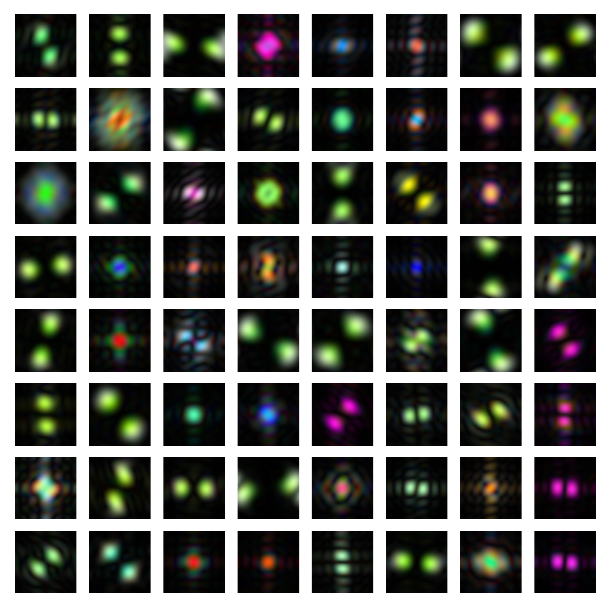}
        \caption{Standard ResNet-34}
    \label{subfig:convkernels_ri}
    \end{subfigure}
    \begin{subfigure}{.495\textwidth}
        \includegraphics[width=0.49\textwidth]{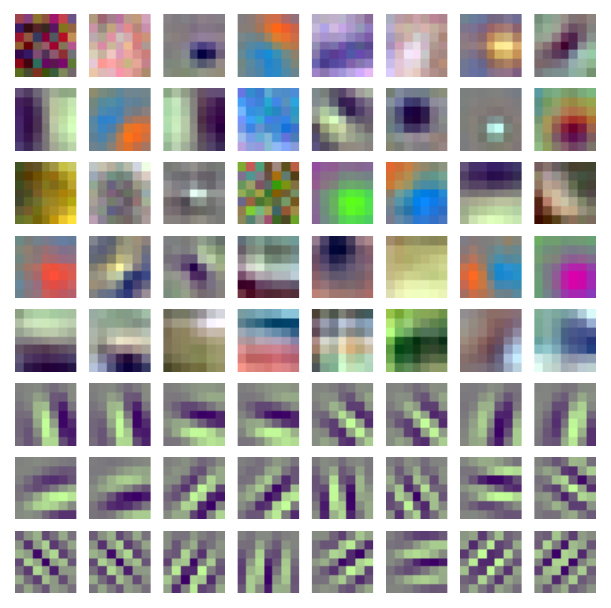}
        \includegraphics[width=0.49\textwidth]{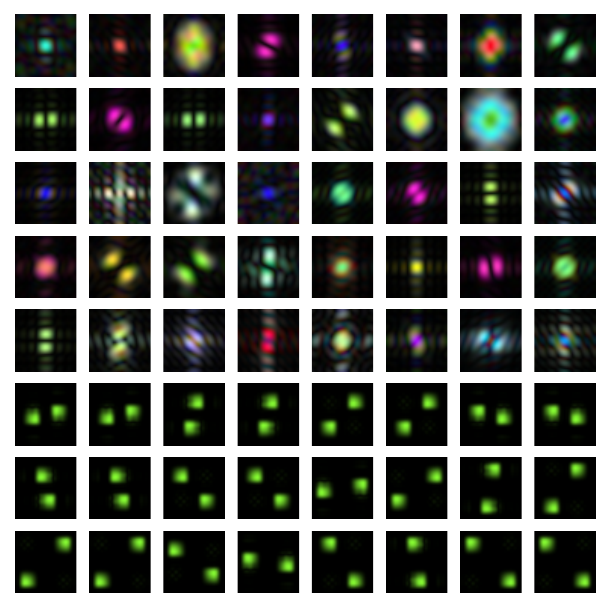}
        \caption{\waveresnet-34 (\dtcwpt-based twin)}
    \label{subfig:convkernels_rwyi}
    \end{subfigure}
    
    \caption{
        Convolution kernels $\weightmultimg \in \twodseqs^{64 \times 3}$ for the models based on AlexNet and ResNet-34, after training with ImageNet.
        Each image represents a 3D filter $(\weightimgSelect)_{\selectInchannelInRGB}$, for any output channel $\selectOutchannel \in \setof{64}$. For our \dtcwpt-based twin architecture (\cref{subfig:convkernels_awyi,subfig:convkernels_rwyi}), the $\outchannelsLow := 32$ or $40$ first kernels are freely-trained, whereas the remaining $\outchannelsHigh := 32$ or $24$ kernels are constrained to be monochrome, band-pass and oriented.
        Left: representation in the spatial domain; right: corresponding power spectra.
    }
    \vspace{-0pt}
    \label{fig:convkernels_wavealexnet}
\end{figure*}
In the present paper, we refer to these specific output channels $\selectOutchannel \in \gaborchannels \subset \setof{\outchannels}$ as \emph{Gabor channels}. The main idea is to substitute, for any $\selectOutchannel \in \gaborchannels$, \rmax by \cmod, as explained hereafter. Following \eqref{eq:rmaxmodel}, expression \eqref{eq:convrelumaxpool} can be rewritten
\begin{equation}
    \actRmaximgSelect = \relu\bigl(
        \rmaximgSelect + \biasvalSelect
    \bigr),
\label{eq:convmaxpoolrelu}
\end{equation}
where $\rmaximgSelect$ is the output of an \rmax operator as introduced in \eqref{eq:rmaxlayer}. More formally,
\begin{equation}
	\rmaximgSelect := \maxpool \left(
        \sumoverInchannels (\inpimgSelect \star \weightimgSelect) \downarrow \sub
    \right).
\label{eq:rmaxoutput}
\end{equation}
Then, following \eqref{eq:cmodmodel}, the \rmax-\cmod substitution yields
\begin{equation}
    \actCmodimgSelect = \relu\bigl(
        \cmodimgSelect + \biasvalSelect
    \bigr),
\label{eq:convmodulusrelu}
\end{equation}
where $\cmodimgSelect$ is the output of a \cmod operator \eqref{eq:cmodlayer}, satisfying
\begin{equation}
	\cmodimgSelect := \left|
        \sumoverInchannels (\inpimgSelect \star \complexWeightimgSelect) \downarrow (2\sub)
    \right|.
\label{eq:cmodoutput}
\end{equation}
In the above expression, $\complexWeightimgSelect$ is a complex-valued analytic kernel defined as $\complexWeightimgSelect := \weightimgSelect + i \hilberttransf(\weightimgSelect)$,
where $\hilberttransf$ denotes the two-dimensional \emph{Hilbert transform} as introduced by Havlicek \etal \cite{Havlicek1997}.
The Hilbert transform is designed such that the Fourier transform of $\complexWeightimgSelect$ is entirely supported in the half-plane of nonnegative \xcoord-values. Therefore, since $\weightimgSelect$ has a well-defined frequency and orientation, the energy of $\complexWeightimgSelect$ is concentrated within a small window in the Fourier domain. Due to this property, the modulus operator provides a smooth envelope for complex-valued cross-correlations with $\complexWeightimgSelect$ \cite{Kingsbury1998}. This leads to the output $\cmodimgSelect$ \eqref{eq:cmodoutput} being nearly invariant to translations.
Additionally, the subsampling factor in \eqref{eq:cmodoutput} is twice that in \eqref{eq:rmaxoutput}, to account for the factor-$2$ subsampling achieved through max pooling \eqref{eq:maxpool}.

\subsection{Wavelet-Based Twin Models (\wavecnns)}
\label{subsec:proposedmodels_baseline}

As explained in \cref{subsec:proposedmodels_principle}, introducing an imaginary part to the Gabor-like convolution kernels improves shift invariance. Our method therefore restricts to the Gabor channels $\selectOutchannel \in \gaborchannels \subset \setof{\outchannels}$. However, $\gaborchannels$ is unknown a priori: for a given output channel $\selectOutchannelInrange$, whether $\weightimgSelect$ will become band-pass and oriented after training is unpredictable. Thus, we need a way to automatically separate the set $\gaborchannels$ of Gabor channels from the set of remaining channels, denoted by $\freechannels := \setof{\outchannels} \setminus \gaborchannels$. To this end, we built ``mathematical twins'' of standard CNNs, based on the dual-tree wavelet packet transform (\dtcwpt). These models, which we call \wavecnns, reproduce the behavior of freely-trained architectures with a higher degree of control and fewer trainable parameters.
In short, the two groups of output channels are organized such that $\freechannels = \setof{\outchannelsLow}$ and $\gaborchannels = \range{(\outchannelsLow + 1)}{\outchannels}$.
The first $\outchannelsLow$ channels, which are outside the scope of our approach, remain freely-trained as in the standard architecture. The remaining $\outchannelsHigh := 1 - \outchannelsLow$ channels are constrained to adopt a Gabor-like structure with deterministic frequencies and orientations, through the implementation of \dtcwpt.
Using the principles introduced in \cref{subsec:proposedmodels_principle}, we then replace \rmax \eqref{eq:rmaxoutput} by \cmod \eqref{eq:cmodoutput} for all Gabor channels $\selectOutchannel \in \gaborchannels$. The corresponding models are referred to as \cwavecnns.
A detailed description of \wavecnns and \cwavecnns is provided in Appendix~\ref{sec:appendix_wcnn_genarch},
together with schematic representations.

\subsection{\wavecnns with Blur Pooling}
\label{subsec:proposedmodels_antialiasing}

We benchmark our approach against the antialiasing methods proposed by Zhang \cite{Zhang2019} and Zou \etal \cite{Zou2023}. To this end, we first consider a \wavecnn antialiased with static or adaptive blur pooling, respectively referred to as \blurwavecnn and \adablurwavecnn. Then, we substitute the blurpooled Gabor channels with our own \cmod-based approach. The corresponding models are respectively referred to as \cblurwavecnn and \cadablurwavecnn. A schematic representation of \blurwavealexnet and \cblurwavealexnet can be found in \cref{fig:models}.

\section{Experiments}
\label{sec:experiments}

To ensure reproducibility, we have released the code associated with our study on GitHub.%
\footnote{
    \url{https://github.com/hubert-leterme/wcnn}
}

\subsection{Experiment Details}
\label{subsec:experiments_details}

\paragraph*{ImageNet}
We built our \wavecnn and \cwavecnn twin models based on AlexNet \cite{Krizhevsky2017} and ResNet-34 \cite{He2016}. The hyperparameter $\outchannelsLow$ was manually chosen based on empirical observations ($32$ for AlexNet and $40$ for ResNet-34). Besides, \dtcwpt decompositions were performed with Q-shift orthogonal filters of length $10$ as introduced by Kingsbury \cite{Kingsbury2003}.
More details can be found in Appendix~\ref{sec:appendix_impl_details}.

Zhang's static blur pooling approach has been tested on both AlexNet and ResNet, whereas Zou \etal's adaptive approach has only been tested on ResNet. The latter was indeed not implemented on AlexNet in the original paper, and we were unable to adapt it to this architecture.

Our models were trained on the ImageNet ILSVRC2012 dataset \cite{Russakovsky2015}, following the standard procedure provided by the PyTorch library \cite{Paszke2017}.%
\footnote{
    PyTorch ``examples'' repository available at
    \newblock \url{https://github.com/pytorch/examples/tree/main/imagenet}
} Moreover, we set aside $100$K images from the training set---$100$ per class---in order to compute the top-$1$ error rate after each training epoch (``validation set'').

\paragraph*{CIFAR-10}
We also trained ResNet-18- and ResNet-34-based models on the CIFAR-10 dataset. Training was performed on $300$ epochs, with an initial learning rate set to $0.1$, decreased by a factor of $10$ every $100$ epochs. We set aside $5\,000$ images out of $50$K to compute accuracy during the training phase.

\subsection{Evaluation Metrics}
\label{subsec:experiments_metrics}
\paragraph*{Classification Accuracy}
Classification accuracy was computed on the ImageNet test set ($50$K images). We followed the \emph{ten-crops} procedure~\cite{Krizhevsky2017}: predictions are made over $10$ patches extracted from each input image, and the softmax outputs are averaged to get the overall prediction. We also considered center crops of size $224$ for \emph{one-crop} evaluation. In both cases, we used top-1-5 error rates. For CIFAR-10 evaluation ($10$K images in the test set), we measured the top-1 error rate with one- and ten-crops.

\paragraph*{Measuring Shift Invariance}
For each image in the ImageNet evaluation set, we extracted several patches of size $224$, each of which being shifted by $0.5$ pixel along a given axis. We then compared their outputs in order to measure the model's robustness to shifts. This was done by computing the Kullback-Leibler (KL) divergence between output vectors---which, under certain hypotheses, can be interpreted as probability distributions \cite[pp.~205-206]{Bishop2014}. This metric is intended for visual representation (see \cref{fig:valcurves_shifts}).

In addition, we measured the mean flip rate (mFR) between predictions \cite{Hendrycks2019}, as done by Zhang \cite{Zhang2019} in its blurpooled models. For each direction (vertical, horizontal and diagonal), we measured the mean frequency upon which two shifted input images yield different top-1 predictions, for shift distances varying from $1$ to $8$ pixels. We then normalized the results with respect to AlexNet's mFR, and averaged over the three directions. This metric is also referred to as \emph{consistency}.

We repeated the procedure for the models trained on CIFAR-10. This time, we extracted patches of size $32 \times 32$ from the evaluation set, and computed mFR for shifts varying from $1$ to $4$ pixels. Normalization was performed with respect to ResNet-18's mFR.

\subsection{Results and Discussion}
\label{subsec:experiments_results}

\begin{figure}
    \centering
    \includegraphics[width=\columnwidth, height=0.21\textheight]{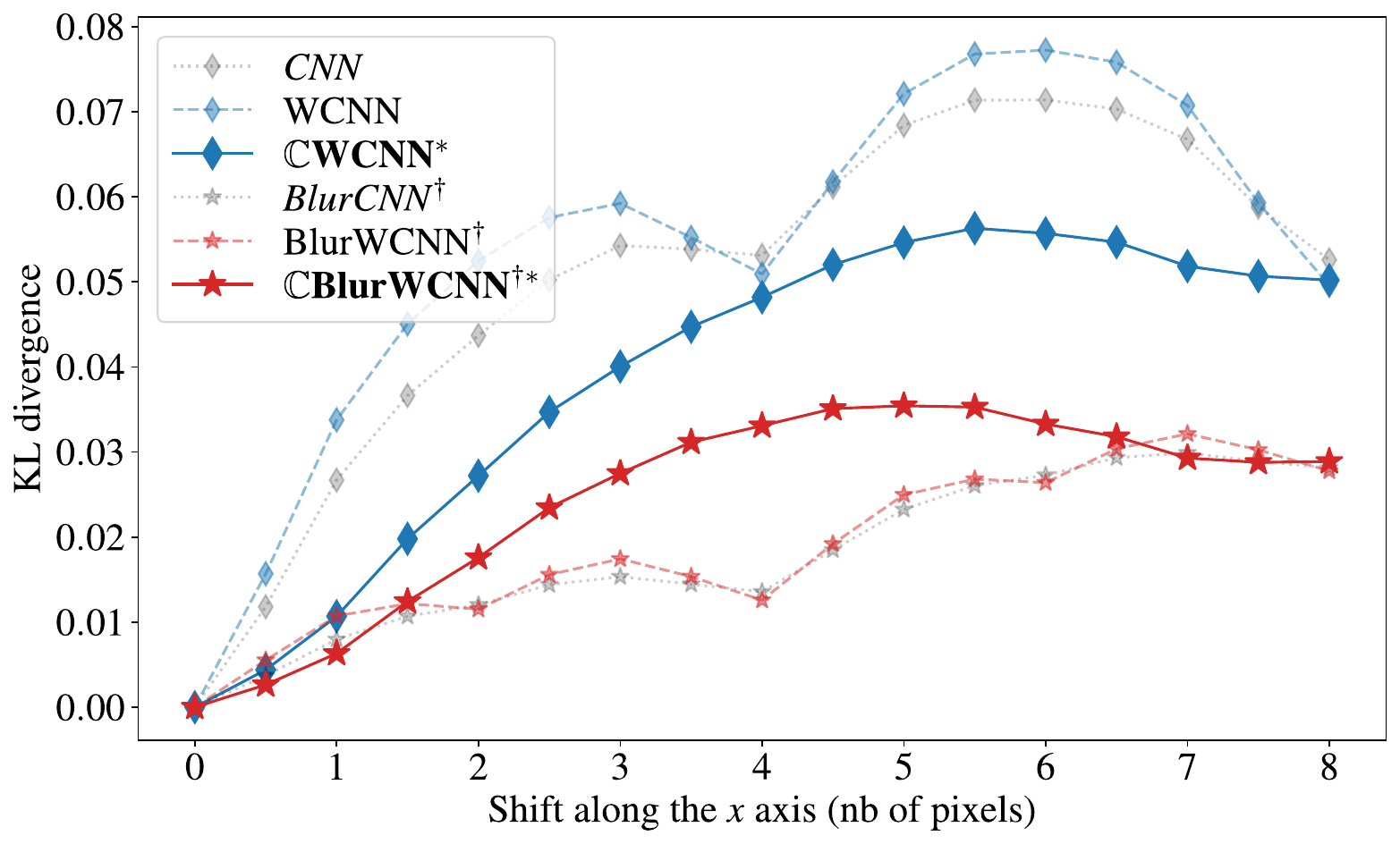}
    \vspace{-20pt}
    \caption{
        AlexNet-based models: mean KL divergence between the outputs of
        shifted images. Legend: $^\dagger$blur pooling; $^\ast$\cmod-based approach (ours).
    }
    \label{fig:valcurves_shifts}
    \vspace{-15pt}
\end{figure}

\begin{table}
    \renewcommand{\arraystretch}{1.}

    \caption{Evaluation metrics on ImageNet (\%): the lower the better}
    \vspace{-5pt}
    \centering\footnotesize
    \begin{tabular}{r|rr|rr|r}
        \hline
        \multicolumn{1}{c|}{\multirow{2}{*}{\textbf{Model}}} & \multicolumn{2}{c|}{\textbf{One-crop}} & \multicolumn{2}{c|}{\textbf{Ten-crops}} & \multicolumn{1}{c}{\textbf{Shifts}} \\

        & \multicolumn{1}{c}{top-1} & \multicolumn{1}{c|}{top-5} & \multicolumn{1}{c}{top-1} & \multicolumn{1}{c|}{top-5} & \multicolumn{1}{c}{mFR} \\

        \hline
        \multicolumn{1}{c|}{} & \multicolumn{5}{c}{\textbf{AlexNet}} \\
        \hline

        \textit{CNN} & \textit{45.3} & \textit{22.2} & \textit{41.3} & \textit{19.3} & \textit{100.0} \\

        \wavecnn & 44.9 & 21.8 & 40.8 & 19.0 & 101.4 \\

         {\cwavecnn}$^\ast$ & \textbf{44.3} & \textbf{21.3} & \textbf{40.2} & \textbf{18.5} & \textbf{88.0} \\

        \hdashline

        \textit{\blurcnn}$^\dagger$ & \textit{44.4} & \textit{21.6} & \textit{40.7} & \textit{18.7} & \textit{63.8} \\

        {\blurwavecnn}$^\dagger$ & 44.3 & 21.4 & 40.5 & 18.5 & \textbf{63.1} \\

        {\cblurwavecnn}$^{\dagger\ast}$ & \textbf{43.3} & \textbf{20.5} & \textbf{39.6} & \textbf{17.9} & 69.4 \\

        \hline
        \multicolumn{1}{c|}{} & \multicolumn{5}{c}{\textbf{ResNet-34}}  \\
        \hline

        \textit{CNN} & \textit{27.6} & \textit{9.2} & \textit{24.8} & \textit{7.7} & \textit{78.1} \\

        \wavecnn & 27.4 & 9.2 & 24.7 & 7.6 & 77.2 \\

        {\cwavecnn}$^\ast$ & \textbf{27.2} & \textbf{9.0} & \textbf{24.4} & \textbf{7.4} & \textbf{73.1} \\

        \hdashline

        \textit{\blurcnn}$^\dagger$ & \textit{26.7} & \textit{8.6} & \textit{24.0} & \textit{7.2} & \textit{61.2}  \\

        {\blurwavecnn}$^\dagger$ & 26.7 & 8.6 & 24.1 & 7.3 & 65.2 \\

        {\cblurwavecnn}$^{\dagger\ast}$ & \textbf{26.5} & \textbf{8.4} & \textbf{23.7} & \textbf{7.0} & \textbf{62.5} \\

        \hdashline

        \textit{\adablurcnn}$^{\ddagger}$ & \textit{26.1} & \textit{8.3} & \textit{23.5} & \textit{7.0} & \textit{60.8} \\

        {\adablurwavecnn}$^{\ddagger}$ & \textbf{26.0} & \textbf{8.2} & \textbf{23.6} & \textbf{6.9} & \textbf{62.1} \\

        {\cadablurwavecnn}$^{\ddagger\ast}$ & 26.1 & \textbf{8.2} & 23.7 & 7.0 & 63.1 \\

        \hline

        \multicolumn{6}{l}{$^\dagger$static and $^\ddagger$adaptive blur pooling; $^\ast$\cmod-based approach (ours)}
    \end{tabular}
	\label{table:results_imagenet}

    \vspace{5pt}

    \caption{Evaluation metrics on CIFAR-10 (\%): the lower the better}
    \vspace{-5pt}

    \begin{tabular}{r|rrr|rrr}
        \hline
        \multicolumn{1}{c|}{\multirow{2}{*}{\textbf{Model}}} & \multicolumn{3}{c|}{\textbf{ResNet-18}} & \multicolumn{3}{c}{\textbf{ResNet-34}} \\

        & \multicolumn{1}{c}{1crp} & \multicolumn{1}{c}{10crp} & \multicolumn{1}{c|}{shifts} & \multicolumn{1}{c}{1crp} & \multicolumn{1}{c}{10crps} & \multicolumn{1}{c}{shifts} \\

        \hline

        \textit{CNN} & \textit{14.9} & \textit{10.8} & \textit{100.0} & \textit{15.2} & \textit{10.9} & \textit{100.3} \\

        \wavecnn & 14.2 & 10.3 & 92.4 & 14.5 & 10.5 & 99.2 \\

        {\cwavecnn}$^\ast$ & \textbf{13.8} & \textbf{9.6} & \textbf{88.8} & \textbf{12.9} & \textbf{9.2} & \textbf{93.0} \\

        \hdashline

        \textit{\blurcnn}$^\dagger$ & \textit{14.2} & \textit{10.4} & \textit{87.7} & \textit{15.7} & \textit{11.6} & \textit{88.2} \\

        {\blurwavecnn}$^\dagger$ & 13.1 & 9.7 & \textbf{84.6} & 13.2 & 9.9 & 85.6 \\

        {\cblurwavecnn}$^{\dagger\ast}$ & \textbf{12.3} & \textbf{8.9} & 85.7 & \textbf{12.4} & \textbf{9.1} & \textbf{83.7} \\

        \hdashline

        \textit{\adablurcnn}$^{\ddagger}$ & \textit{14.6} & \textit{11.0} & \textit{90.9} & \textit{16.3} & \textit{12.8} & \textit{91.9} \\

        {\adablurwavecnn}$^{\ddagger}$ & 14.5 & 11.0 & 86.5 & 14.0 & 10.4 & 93.3 \\

        {\cadablurwavecnn}$^{\ddagger\ast}$ & \textbf{12.8} & \textbf{9.7} & \textbf{81.7} & \textbf{12.8} & \textbf{9.2} & \textbf{86.6} \\

        \hline

        \multicolumn{7}{l}{\emph{1crp} and \emph{10crp}: top-1 error rate using one- and ten-crops methods} \\

        \multicolumn{7}{l}{\emph{shifts}: mFR measuring consistency} \\

        \multicolumn{7}{l}{$^\dagger$static and $^\ddagger$adaptive blur pooling; $^\ast$\cmod-based approach (ours)}
    \end{tabular}
	\label{table:results_cifar}
    \vspace{-15pt}
\end{table}

\paragraph*{Validation and Test Accuracy}
Error rates of AlexNet- and ResNet-based architectures, computed on the test sets, are provided in \cref{table:results_imagenet} for ImageNet and \cref{table:results_cifar} for CIFAR-10.

When trained on ImageNet, our \cmod-based approach significantly outperforms the baselines for AlexNet: \cwavecnn vs \wavecnn, and \cblurwavecnn vs \blurwavecnn. 
Positive results are also obtained for ResNet-based models trained on ImageNet. However, adaptive blur pooling, when applied to the Gabor channels (\adablurwavecnn), yields similar or marginally higher accuracy than our approach (\cadablurwavecnn).
Nevertheless, our method is computationally more efficient, requires less memory (see ``Computational Resources'' below for more details), and does not demand additional training, unlike adaptive blur pooling.
On the other hand, when trained on CIFAR-10, our approach systematically yields the lowest error rates.

\paragraph*{Shift Invariance (KL Divergence)}
The mean KL divergence between the outputs of shifted images are plotted in \cref{fig:valcurves_shifts} for AlexNet trained on ImageNet. The mean flip rate for shifted inputs (consistency) is reported in \cref{table:results_imagenet} for ImageNet (AlexNet and ResNet-34) and \cref{table:results_cifar} for CIFAR-10 (ResNet-18 and 34).

In models without blur pooling (blue curves), the \rmax-\cmod substitution greatly reduces first-layer instabilities, resulting in a flattened curve and avoiding the ``bumps'' observed for non-stabilized models.
On the other hand, when applied to the blurpooled models (red curves), the \rmax-\cmod substitution actually tends to degrade shift invariance, as evidenced by the bell-shaped curve. Nevertheless, the corresponding classifier is significantly more accurate, as shown in \cref{table:results_imagenet}. This is not surprising, as our approach prioritizes the conservation of high-frequency details, which are important for classification. An extreme reduction of shift variance using a large blur pooling filter would indeed result in a significant loss of accuracy.
Therefore,
our work achieves a better tradeoff between shift invariance and information preservation.

To gain further insights into this phenomenon, we conducted experiments by varying the size of the blurring filters. \Cref{fig:tradeoff} shows the relationship between consistency and prediction accuracy on ImageNet (custom validation set), for AlexNet-based models with different blurring filter sizes ranging from $1$ (no blur pooling) to $7$ (heavy loss of high-frequency information).
Additional plots are provided in Appendix~\ref{sec:appendix_accuracy_vs_consistency},
for the test set as well as ResNet-based models. We find that a near-optimal trade-off is achieved when the filter size is set to $2$ or $3$. Furthermore, at equivalent consistency levels, \cblurwavecnn (our approach) outperforms \blurwavecnn in terms of accuracy.

\begin{figure}
    \centering
    \includegraphics[width=\columnwidth]{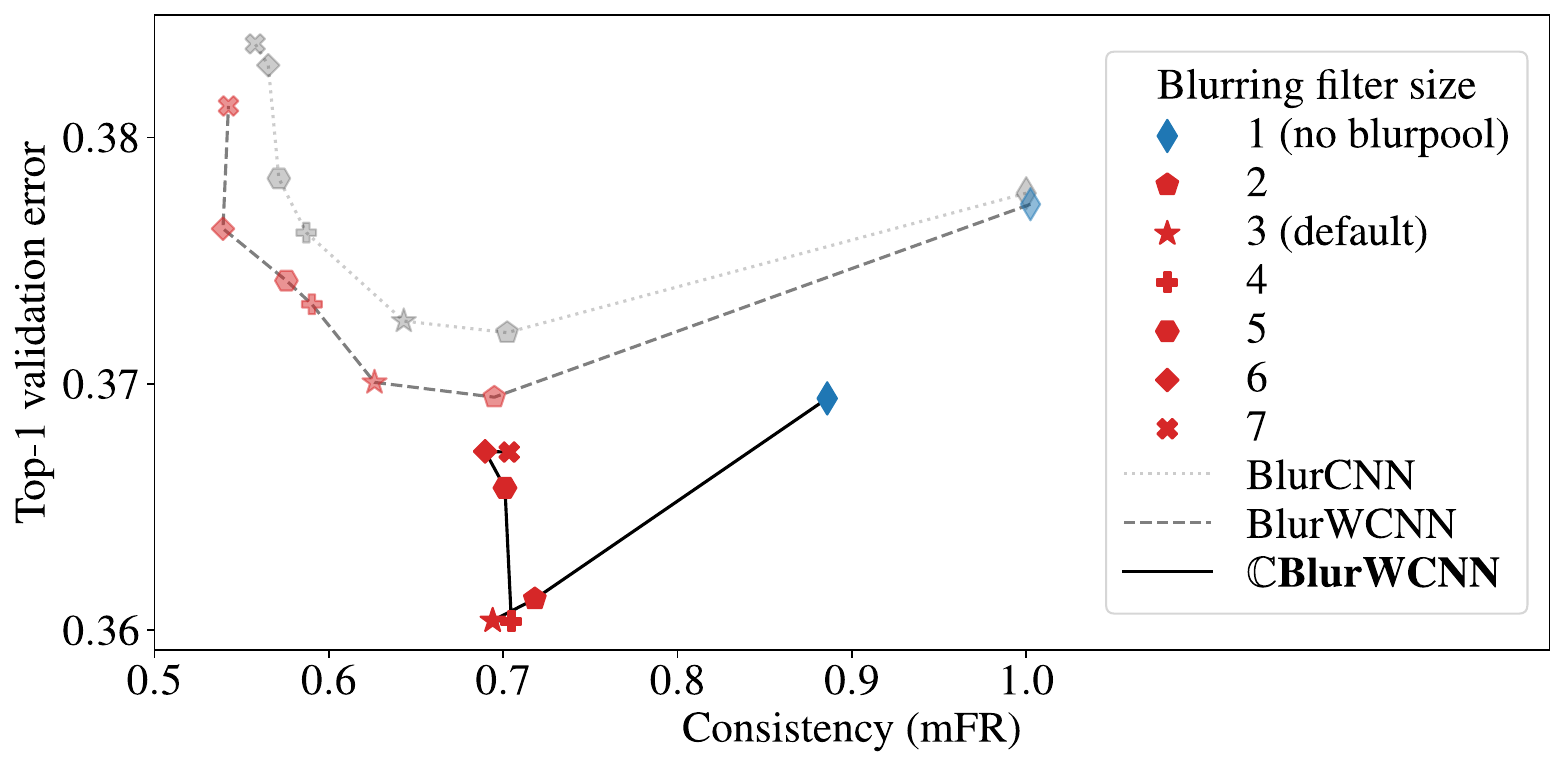}
    \vspace{-20pt}
    \caption{
        Classification accuracy (ten-crops) vs consistency, measuring the stability of predictions to small input shifts, for AlexNet-based models (the lower the better for both axes). For each of the three architectures, we increased the blurring filter size from $1$ (\ie, no blur pooling) to $7$. The blue diamonds (no blur pooling) and red stars (blur pooling with filters of size $3$) correspond to the models for which evaluation metrics have been reported in \cref{table:results_imagenet} (models trained after $90$ epochs).
    }
    \label{fig:tradeoff}
\end{figure}


As a side note, because shift invariance is desirable for a wide range of tasks and datasets, embedding this property into CNNs may improve generalizability and avoid overfitting. 

\paragraph*{Computational Resources}
\Cref{table:complexity} displays the computational resources and memory footprint required for each method, per Gabor channel. The values are normalized relative to non-stabilized AlexNet or ResNet. The metrics are, on the one hand, the FLOPs necessary for computing $\rmaximgSelect$ \eqref{eq:rmaxoutput} or $\cmodimgSelect$ \eqref{eq:cmodoutput}, and, on the other hand, the size of the intermediate and output tensors saved by PyTorch for the backward pass. More details are provided in Appendix~\ref{sec:appendix_computational_costs}.

\begin{table}
    \caption{Computational cost and memory footprint}
    \vspace{-3pt}
    \renewcommand{\arraystretch}{1.}
    \centering\footnotesize
    \begin{tabular}{r|rr|rr}
        \hline
        \multicolumn{1}{c|}{\multirow{2}{*}{\textbf{Method}}} & \multicolumn{2}{c|}{\textbf{Computational cost}} & \multicolumn{2}{c}{\textbf{Memory footprint}} \\
        & AlexNet & ResNet & AlexNet & ResNet \\
        \hline
        \textit{No antialiasing (ref)} & $\mathit{1.0}$ & $\mathit{1.0}$ & $\mathit{1.0}$ & $\mathit{1.0}$ \\
        BlurPool \cite{Zhang2019} & $4.0$ & $1.0$ & $4.7$ & $1.9$ \\
        ABlurPool \cite{Zou2023} & -- & $2.1$ & -- & $2.0$ \\
        \textbf{\cmod (ours)} & $\mathbf{0.5}$ & $\mathbf{0.5}$ & $\mathbf{0.6}$ & $\mathbf{0.4}$ \\
        \hline
    \end{tabular}
	\label{table:complexity}
    \vspace{-8pt}
\end{table}

The observed improvements are mainly due to the larger stride (\ie, subsampling factor) in the first layer, allowing for smaller intermediate feature maps.

\section{Conclusion}
The mathematical twins introduced in this paper serve as a proof of concept for our \cmod-based approach. However, its range of application extends well beyond \dtcwpt filters.
It is important to note that such initial layers play a critical role in CNNs by extracting low-level geometric features such as edges, corners or textures. Therefore, a specific attention is required for their design. In contrast, deeper layers are more focused on capturing high-level structures that conventional image processing tools are poorly suited for \cite{Oyallon2017a}.

Furthermore,
our approach
has potential for broader applicability beyond CNNs. There is a growing interest in using self-attention mechanisms in computer vision \cite{Dosovitskiy2021} to capture complex, long-range dependencies among image representations. Recent work on vision transformers has proposed using the first layers of a CNN as a ``convolutional token embedding'' \cite{Hassani2022,Wu2021,Yuan2021}, effectively reintroducing inductive biases to the architecture, such as locality and weight sharing. By applying our method to this embedding, we can potentially provide self-attention modules with shift-invariant inputs. This could be beneficial in improving the performance of vision transformers, especially when the amount of available data is limited.

\appendices

\section{Design of \wavecnns: General Architecture}
\label{sec:appendix_wcnn_genarch}

In this section, we provide complements to the description of the mathematical twin (\wavecnn) introduced in \cref{subsec:proposedmodels_baseline,subsec:proposedmodels_antialiasing}.

We assume, without loss of generality, that $\inchannels = 3$ (RGB input images).
The numbers $\outchannelsLow$ and $\outchannelsHigh$ of freely-trained and Gabor channels are empirically determined from the trained CNNs (see \cref{subfig:convkernels_ai,subfig:convkernels_ri}).
In a twin \wavecnn architecture, the two groups of output channels are organized such that $\freechannels = \setof{\outchannelsLow}$ and $\gaborchannels = \range{(\outchannelsLow + 1)}{\outchannels}$.
The first $\outchannelsLow$ channels, which are outside the scope of our approach, remain freely-trained, like in the standard architecture. Regarding the $\outchannelsHigh$ remaining channels (Gabor channels), the convolution kernels $\weightimgSelect$ with $\selectOutchannel \in \gaborchannels$ are constrained to satisfy the following requirements. First, all three RGB input channels are processed with the same filter, up to a multiplicative constant. More formally, there exists a \emph{luminance} weight vector $\colormixvec := (\colormixval_1,\, \colormixval_2,\, \colormixval_3)^\top$, with $\colormixvalSelect \in \zeroone$ and $\sumoverInchannelsRGB \colormixvalSelect = 1$, such that,
\begin{equation}
    \forall \selectInchannelInRGB,\, \weightimgSelect = \colormixvalSelect\avgWeightimgSelect,
\label{eq:monochrome}
\end{equation}
where $\avgWeightimgSelect := \sumoverInchannelsRGB \weightimgSelect$ denotes the mean kernel. Furthermore, $\avgWeightimgSelect$ must be band-pass and oriented (Gabor-like filter). The following paragraphs explain how these two constraints are implemented in our \wavecnn architecture.

\subsection{Monochrome Filters}
Expression \eqref{eq:monochrome} is actually a property of standard CNNs: the oriented band-pass RGB kernels generally appear monochrome (see kernel visualization of freely-trained CNNs in \cref{subfig:convkernels_ai,subfig:convkernels_ri}). In \wavecnns, this constraint is implemented with a trainable $\obo$ convolution layer \cite{Lin2014}, parameterized by $\colormixvec$, computing the following luminance image:
\begin{equation}
    \colormiximg := \sumoverInchannelsRGB \colormixvalSelect \inpimgSelect.
\label{eq:colormix}
\end{equation}
This constraint can be relaxed by authorizing a specific luminance vector $\colormixvec_\selectOutchannel$ for each Gabor channel $\selectOutchannel \in \gaborchannels$. Numerical experiments on such models are left for future work.

\subsection{Gabor-Like Kernels}
To guarantee the Gabor-like property on $\avgWeightimgSelect$, we implemented \dtcwpt, which is achieved through a series of subsampled convolutions.
The number of decomposition stages $\depth \in \mathN \setminus \{0\}$ was chosen such that $\sub = 2^{\depth - 1}$, where, as a reminder, $\sub$ denotes the subsampling factor as introduced in \eqref{eq:conv}. \dtcwpt generates a set of filters $\bigl(
    \complexWeightimgDtSelect
\bigr)_{\selectDtchannel' \in \setof{4 \times 4^\depth}}$, which tiles the Fourier domain $\mpipi^2$ into $4 \times 4^\depth$ overlapping square windows.
Their real and imaginary parts approximately form a 2D Hilbert transform pair. \Cref{fig:convkernels_example} illustrates such a convolution filter.

\begin{figure}
    \centering
    \input{inputs/convkernels_example_preprint.tex}
    \caption{
        (a), (b): Real and imaginary parts of a Gabor-like convolution kernel $\complexWeightimgSelect := \weightimgSelect + i\hilberttransf(\weightimgSelect)$, forming a 2D Hilbert transform pair. (c), (d): Power spectra (energy of the Fourier transform) of $\weightimgSelect$ and $\complexWeightimgSelect$, respectively.
    }
    \vspace{-0pt}
    \label{fig:convkernels_example}
\end{figure}

The \wavecnn architecture is designed such that, for any Gabor channel $\selectOutchannel \in \gaborchannels$, $\avgWeightimgSelect$ is the real part of one such filter:
\begin{equation}
    \exists \selectDtchannel' \in \setof{4 \times 4^\depth} : \avgWeightimgSelect = \Real\bigl(
        \complexWeightimgDtSelect
    \bigr).
\label{eq:gaborfilter}
\end{equation}
The output $\outimgSelect$ introduced in \eqref{eq:conv} then becomes
\begin{equation}
    \outimgSelect = \bigl(
        \colormiximg \star \avgWeightimgSelect
    \bigr) \downarrow 2^{\depth - 1}.
\label{eq:dtcwpt_real}
\end{equation}


To summarize, a \wavecnn substitutes the freely-trained convolution \eqref{eq:conv} with a combination of \eqref{eq:colormix} and \eqref{eq:dtcwpt_real}, for any Gabor output channels $\selectOutchannel \in \gaborchannels$. This combination is wrapped into a \emph{wavelet block}, also referred to as \waveblock. Technical details about its exact design are provided in \cref{sec:wcnn_techdetails}.
Note that the Fourier resolution of $\weightimgSelect$ increases with the subsampling factor $\sub$. This property is consistent with what is observed in freely-trained CNNs: in AlexNet, where $\sub = 4$, the Gabor-like filters are more localized in frequency (and less spatially localized) than in ResNet, where $\sub = 2$.

Visual representations of the kernels $\weightmultimg \in \twodseqs^{\outchannels \times \inchannels}$, with $\inchannels = 3$ and $\outchannels = 64$, for the \wavecnn architectures based on AlexNet and ResNet-34, referred to as \wavealexnet and \waveresnet-34, are provided in \cref{subfig:convkernels_awyi,subfig:convkernels_rwyi}, respectively.

\subsection{Stabilized \wavecnns}

\begin{figure*}
    \centering
    \input{inputs/models_alexnet}
    \caption{
        First layers of AlexNet and its variants, corresponding to a convolution layer followed by ReLU and max pooling \eqref{eq:rmaxmodel}. The models are framed according to the same colors and line styles as in \cref{fig:valcurves_shifts} (main paper). The green modules are the ones containing trainable parameters; the orange and purple modules represent static linear and nonlinear operators, respectively. The numbers between each module represent the depth (number of channels), height and width of each output.
        \cref{subfig:models_alexnet}: freely-trained models. Top: standard AlexNet. Bottom: Zhang's ``blurpooled'' AlexNet.
        \cref{subfig:models_wavealexnet}: mathematical twins (\wavealexnet) reproducing the behavior of standard (top) and blurpooled (bottom) AlexNet.
        The left side of each diagram corresponds to the $\outchannelsLow := 32$ freely-trained
        output channels, whereas the right side displays the $\outchannelsHigh := 32$ remaining channels, where freely-trained convolutions have been replaced by a wavelet block (\waveblock) as described in \cref{sec:appendix_wcnn_genarch}. 
        \cref{subfig:models_cwavealexnet}: \cmod-based \wavealexnet, where \waveblock has been replaced by \cwaveblock, and max pooling by a modulus. The bias and ReLU are placed after the modulus, following \eqref{eq:cmodmodel}. In the bottom models, we compare Zhang's antialiasing approach (\cref{subfig:models_wavealexnet}) with ours (\cref{subfig:models_cwavealexnet}) in the Gabor channels.
    }
    \vspace{-0pt}
\label{fig:models}
\end{figure*}

Using the principles presented in \cref{subsec:proposedmodels_principle} of the main paper, we replace \rmax \eqref{eq:rmaxoutput} by \cmod \eqref{eq:cmodoutput} for all Gabor channels $\selectOutchannel \in \gaborchannels$. In the corresponding model, referred to as \cwavecnn, the wavelet block is replaced by a \emph{complex wavelet block} (\cwaveblock), in which \eqref{eq:dtcwpt_real} becomes
\begin{equation}
    \compleximgSelect = \bigl(
        \colormiximg \star \complexAvgWeightimgSelect
    \bigr) \downarrow 2^{\depth},
\label{eq:dtcwpt_complex}
\end{equation}
where $\complexAvgWeightimgSelect$ is obtained by considering both real and imaginary parts of the \dtcwpt filter:
\begin{equation}
    \complexAvgWeightimgSelect := \complexWeightimgDtSelect,
\end{equation}
where $\selectDtchannel'$ has been introduced in \eqref{eq:gaborfilter}. Then, a modulus operator is applied to $\compleximgSelect$, which yields $\cmodimgSelect$ such as defined in \eqref{eq:cmodoutput}, with $\complexWeightimgSelect := \colormixvalSelect\complexAvgWeightimgSelect$ for any RGB channel $\selectInchannelInRGB$. Finally, we apply a bias and ReLU to $\cmodimgSelect$, following \eqref{eq:convmodulusrelu}.

A schematic representation of \wavealexnet and its stabilized version, referred to as \cwavealexnet, is provided in \cref{fig:models} (top part).
Following \cref{subsec:proposedmodels_antialiasing}, the \wavecnn and \cwavecnn architectures built upon blurpooled AlexNet, referred to as \blurwavealexnet and \cblurwavealexnet, respectively, are represented in the same figure (bottom part). Note that, for a fair comparison, all three models use blur pooling in the freely-trained channels as well as deeper layers; only the Gabor channels are modified.

\section{Filter Selection and Sparse Regularization}
\label{sec:wcnn_techdetails}

We explained that, for each Gabor channel $\selectOutchannel \in \gaborchannels$, the average kernel $\avgWeightimgSelect$ is the real part of a \dtcwpt filter, as written in \eqref{eq:gaborfilter}. We now explain how the filter selection is done; in other words, how $\selectDtchannel'$ is chosen among $\setof{4 \times 4^\depth}$. Since input images are real-valued, we restrict to the filters with bandwidth located in the half-plane of positive \xcoord-values. For the sake of concision, we denote by $\dtchannels := 2 \times 4^\depth$ the number of such filters.

For any RGB image $\inpmultimg \in \twodseqs^3$, a luminance image $\colormiximg \in \twodseqs$ is computed following \eqref{eq:colormix}, using a $\obo$ convolution layer. Then, \dtcwpt is performed on $\colormiximg$. We denote by $\dtmultimg := (\dtimgSelect)_{\selectDtchannelInrange}$ the tensor containing the real part of the \dtcwpt feature maps:
\begin{equation}
    \dtimgSelect = \bigl(\colormiximg \star \Real\complexWeightimgDepthSelect\bigr) \downarrow 2^{\depth-1}.
\end{equation}
For the sake of computational efficiency, \dtcwpt is performed with a succession of subsampled separable convolutions and linear combinations of real-valued wavelet packet feature maps \cite{Selesnick2005}. To match the subsampling factor $\sub := 2^{\depth - 1}$ of the standard model, the last decomposition stage is performed without subsampling.

\subsection{Filter Selection}

The number of dual-tree feature maps $\dtchannels$ may be greater than the number of Gabor channels $\outchannelsHigh$. In that case, we therefore want to select filters that contribute the most to the network’s predictive power.
First, the low-frequency feature maps ${\dtimg}_0$ and ${\dtimg}_{(4^\depth + 1)}$ are discarded. Then, a subset of $\dtchannelsSubset < \dtchannels$ feature maps is manually selected
and permuted in order to form clusters in the Fourier domain. Considering a (truncated) permutation matrix $\permutmatrix \in \mathR^{\dtchannelsSubset \times \dtchannels}$, the output of this transformation, denoted by $\dtmultimgPerm \in \twodseqs^{\dtchannelsSubset}$, is defined by:
\begin{equation}
    \dtmultimgPerm := \permutmatrix \, \dtmultimg.
\label{eq:permut}
\end{equation}
The feature maps $\dtmultimgPerm$ are then sliced into $\ngroups$ groups of channels $\dtmultimgPermGroup \in \twodseqs^{\inchannelsGroup}$, each of them corresponding to a cluster of band-pass dual-tree filters with neighboring frequencies and orientations. On the other hand, the output of the wavelet block,
    $\outmultimgHigh := (\outimgSelect)_{\selectOutchannel \in \range{\outchannelsLow + 1}{\outchannels}} \in \twodseqs^{\outchannelsHigh}$,
where $\outimgSelect$ has been introduced in \eqref{eq:conv}, is also sliced into $\ngroups$ groups of channels $\outmultimgGroup \in \twodseqs^{\outchannelsGroup}$. Then, for each group $\selectGroupInrange$, an affine mapping between $\dtmultimgPermGroup$ and $\outmultimgGroup$ is performed. It is characterized by a trainable matrix $\featmapmixmatrixGroup := \bigl(
    \featmapmixvecGroup_1,\, \cdots,\, \featmapmixvecGroup_{\outchannelsGroup}
\bigr)^\top \in \mathR^{\outchannelsGroup \times \inchannelsGroup}$ such that, for any $\selectOutchannelInrangeGroup$,
\begin{equation}
    \outimgGroupSelect := \featmapmixvecGroupTopSelect \cdot \dtmultimgPermGroup.
\label{eq:featmapmix}
\end{equation}
As in the color mixing stage, this operation is implemented as a $\obo$ convolution layer.

A schematic representation of the real- and complex-valued wavelet blocks can be found in \cref{fig:waveblock}.

\begin{figure}
    \centering
    \includegraphics[width=0.32\columnwidth]{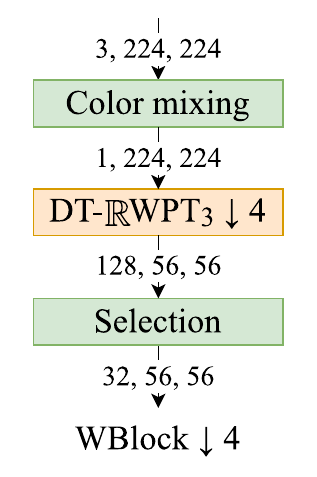}
    \hspace{15pt}
    \includegraphics[width=0.32\columnwidth]{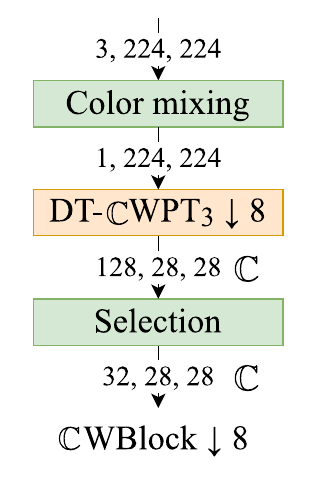}
    \caption{
        Detail of a wavelet block with $\depth = 3$ as in AlexNet, in its \rmax (left)  and \cmod (right) versions. \dtrwpt corresponds to the real part of \dtcwpt.
    }
    \label{fig:waveblock}
\end{figure}

\subsection{Sparse Regularization}

For any group $\selectGroupInrange$ and output channel $\selectOutchannelInrangeGroup$, we want the model to select one and only one wavelet packet feature map within the $\selectGroup$\nobreakdash-th group. In other words, each row vector $\featmapmixvecGroupSelect := \bigl(
    \featmapmixvalGroup_{\selectOutchannel,\, 1},\, \cdots,\, \featmapmixvalGroup_{\selectOutchannel,\, \inchannelsGroup}
\bigr)^\top$ of $\featmapmixmatrixGroup$ contains no more than one nonzero element, such that \eqref{eq:featmapmix} becomes
\begin{equation}
    \outimgGroupSelect = \featmapmixvalGroup_{\selectOutchannel\selectInchannel} \dtimgPermGroupSelect
\end{equation}
for some (unknown) value of $\selectInchannelInrangeGroup$. To enforce this property during training, we add a mixed-norm $l^1/l^\infty$-regularizer~\cite{Liu2010} to the loss function to penalize non-sparse feature map mixing as follows:
\begin{equation}
    \objfun := \objfunbis + \sumof{\selectGroup}{\ngroups} \regparamGroup \sumof{\selectOutchannel}{\outchannelsGroup} \left(\frac{\bignormone{\featmapmixvecGroupSelect}}{\bignorminfty{\featmapmixvecGroupSelect}} - 1\right),
\label{eq:regobjfun}
\end{equation}
where $\objfunbis$ denotes the standard cross-entropy loss and $\regparamvec \in \mathR^\ngroups$ denotes a vector of regularization hyperparameters. Note that the unit bias in \eqref{eq:regobjfun} serves for interpretability of the regularized loss ($\objfun = \objfunbis$ in the desired configuration) but has no impact on training.

\section{Adaptation to ResNet: Batch Normalization}
\label{sec:bnresnet}

\begin{figure*}
    \centering
    \input{inputs/models_resnet}
    \caption{
        First layers of ResNet and its variants, corresponding to a convolution layer followed by ReLU and max pooling. The bias module from \cref{fig:models} has been replaced by an affine batch normalization layer (``BN → Bias'', or ``\cmodbn → Bias'' when placed after Modulus---see \cref{sec:bnresnet}). Top: ResNet without blur pooling. Middle: Zhang's ``blurpooled'' models \cite{Zhang2019}. Bottom: Zou \etal's approach, using adaptive blur pooling \cite{Zou2023}.
    }
\label{fig:models_resnet}
\end{figure*}

In many architectures including ResNet, the bias is computed after an operation called \emph{batch normalization} (BN) \cite{Ioffe2015}. In this context, the first layers have the following structure:
\begin{equation}
    \convlayer \to \subsamplayer \to \bnlayer \to \biaslayer \to \relulayer \to \maxpoollayer.
\label{eq:cnnfirstlayers_bn}
\end{equation}
As shown hereafter, the \rmax-\cmod substitution yields, analogously to \eqref{eq:cmodmodel},
\begin{equation}
    \complexConvlayer \!\to\! \subsamplayer \!\to\! \moduluslayer \!\to\! \bnzerolayer \!\to\! \biaslayer \!\to\! \relulayer,
\label{eq:cmodmodel_bn}
\end{equation}
where \cmodbn refers to a special type of batch normalization without mean centering. A schematic representation of the \dtcwpt-based ResNet architecture and its variants is provided in \cref{fig:models_resnet}.

A BN layer is parameterized by trainable weight and bias vectors, respectively denoted by $\multvec$ and $\biasvec \in \mathR^\outchannels$. In the remaining of the section, we consider input images $\inpmultimg$ as a stack of discrete stochastic processes. Then, expression \eqref{eq:convrelumaxpool} is replaced by
\begin{equation}
    \actimgSelect \!:=\! \maxpool\!\left\{
        \relu\!\left(
            \multvalSelect \!\cdot\! \frac{
                \outimgSelect \!-\! \empiricalmeanSub[\outimgSelect]
            }{
                \sqrt{\empiricalvarSub[\outimgSelect] \!+\! \varepsilon}
            } \!+\! \biasvalSelect
        \right)\!
    \right\}\!,
\label{eq:convbnrelumaxpool}
\end{equation}
with $\outimgSelect$ satisfying \eqref{eq:conv} (output of the first convolution layer). In the above expression, we have introduced $\empiricalmeanSub(\outimgSelect) \in \mathR$ and $\empiricalvarSub(\outimgSelect) \in \mathR_+$, which respectively denote the mean expected value and variance of $\outimgSelect[\vectorindex]$, for indices $\vectorindex$ contained in the support of $\outimgSelect$, denoted by $\supp(\outimgSelect)$. Let us denote by $\imgsize \in \nonzeroMathN$ the support size of input images. Therefore, if the filter's support size $\filtersize$ is much smaller that $\imgsize$, then $\supp(\outimgSelect)$ is roughly of size $\imgsize / \sub$. We thus define the above quantities as follows:
\begin{align}
    \empiricalmeanSub[\outimgSelect] &:= \frac{\sub^2}{\imgsize^2} \sumoverVectorindices \Expval[\outimgSelect[\vectorindex]]; \\
    \empiricalvarSub[\outimgSelect] &:= \frac{\sub^2}{\imgsize^2} \sumoverVectorindices \Var[\outimgSelect[\vectorindex]].
\end{align}
In practice, estimators are computed over a minibatch of images, hence the layer's denomination. Besides, $\varepsilon > 0$ is a small constant added to the denominator for numerical stability. For the sake of concision, we now assume that $\multvec = \bone$. Extensions to other multiplicative factors is straightforward.

Let $\selectOutchannel \in \gaborchannels$ denote a Gabor channel. Then, recall that $\outimgSelect$ satisfies \eqref{eq:dtcwpt_real} (output of the \waveblock), with
\begin{equation}
    \avgWeightimgSelect := \Real \complexAvgWeightimgSelect,
\label{eq:gaborfilter0}
\end{equation}
where $\complexAvgWeightimgSelect$ denotes one of the Gabor-like filters spawned by \dtcwpt. The following proposition states that, if the kernel's bandwidth is small enough, then the output of the convolution layer sums to zero.

\vspace{5pt}

\begin{proposition}
    We assume that the Fourier transform of $\complexAvgWeightimgSelect$ is supported in a region of size $\discretefilterSupportsize \times \discretefilterSupportsize$ which does not contain the origin (Gabor-like filter). If, moreover, $\discretefilterSupportsize \leq \frac{2\pi}{\sub}$, then
    \begin{equation}
        \sumoverVectorindices \outimgSelect[\vectorindex] = 0.
    \label{eq:expval}
    \end{equation}
\label{prop:expval}
\end{proposition}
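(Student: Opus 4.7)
My plan is to work in the Fourier domain and reduce \eqref{eq:expval} to a statement about the vanishing of $\widehat{\avgWeightimgSelect}$ on the aliasing lattice associated with the subsampling. Writing $z := \colormiximg \star \avgWeightimgSelect$ so that $\outimgSelect = z \downarrow \sub$, I use the fact that the total sum of a discrete sequence equals its DTFT at the origin, combined with the standard 2D aliasing identity for subsampling by $\sub$:
\begin{equation*}
    \sumoverVectorindices \outimgSelect[\vectorindex]
    \;=\; \widehat{\outimgSelect}(\bzero)
    \;=\; \frac{1}{\sub^2} \sum_{\bk \in \{0,\ldots,\sub-1\}^2} \widehat{z}\!\left(\tfrac{2\pi\bk}{\sub}\right).
\end{equation*}
Since $\avgWeightimgSelect$ is real-valued, cross-correlation factorizes in the Fourier domain as $\widehat{z}(\bxi) = \widehat{\colormiximg}(\bxi)\,\overline{\widehat{\avgWeightimgSelect}(\bxi)}$, so it is enough to prove that $\widehat{\avgWeightimgSelect}(2\pi\bk/\sub) = 0$ for every $\bk \in \{0,\ldots,\sub-1\}^2$.

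To locate the support of $\widehat{\avgWeightimgSelect}$, I decompose $\avgWeightimgSelect = \tfrac{1}{2}\bigl(\complexAvgWeightimgSelect + \overline{\complexAvgWeightimgSelect}\bigr)$ and invoke the symmetry $\widehat{\overline{h}}(\bxi) = \overline{\widehat{h}(-\bxi)}$. This yields $\supp(\widehat{\avgWeightimgSelect}) \subset \Omega \cup (-\Omega)$, where $\Omega$ is the $\kappa \times \kappa$ region carrying $\widehat{\complexAvgWeightimgSelect}$. By the Hilbert-transform construction \eqref{eq:hilberttranf}, $\Omega$ lies in the closed half-plane $\{\xi_1 \geq 0\}$, and by hypothesis $\bzero \notin \Omega$.

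The central and delicate step — and the expected main obstacle — is the geometric claim that the aliasing lattice $G := \{2\pi\bk/\sub \bmod 2\pi : \bk \in \{0,\ldots,\sub-1\}^2\}$ avoids $\Omega \cup (-\Omega)$ inside $\mpipi^2$. Here the condition $\kappa \leq 2\pi/\sub$ is essential: since $G$ has spacing $2\pi/\sub$ in each coordinate while $\Omega$ has $L^\infty$-diameter at most $\kappa$, each of $\Omega$ and $-\Omega$ can contain at most one lattice point. The origin is ruled out by the explicit hypothesis; for a nonzero lattice point $\bxi_\star \in \Omega$, its reflection $-\bxi_\star$ would lie in $-\Omega$, and I plan to exclude this configuration using the specific position of \dtcwpt tiles at depth $\depth = 1 + \log_2 \sub$ — such a tile lies in the closed half-plane $\{\xi_1 \geq 0\}$, has side strictly smaller than $2\pi/\sub$, and fits inside a single fundamental cell of $G$ without touching its corners. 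Once this geometric fact is established, every term in the aliasing sum vanishes and \eqref{eq:expval} follows.
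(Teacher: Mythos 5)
Your reduction is the right one, and it matches what the paper's one-line proof gestures at via ``Shannon's sampling theorem'': expressing the total sum as the DTFT of $\outimgSelect$ at the origin, expanding it by the aliasing identity over the lattice $\frac{2\pi}{\sub}\mathZ^2$, factoring $\widehat{z}=\widehat{\colormiximg}\,\overline{\widehat{\avgWeightimgSelect}}$, and locating $\supp\widehat{\avgWeightimgSelect}$ inside $\Omega\cup(-\Omega)$ are all correct, and the lattice term at the origin does vanish by hypothesis. The problem is the step you yourself flag as ``central and delicate'': it is not merely left unfinished, it is not provable from the hypotheses of the proposition. A $\discretefilterSupportsize\times\discretefilterSupportsize$ region with $\discretefilterSupportsize\leq 2\pi/\sub$ that avoids the origin can perfectly well contain a \emph{nonzero} aliasing node. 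Take $\sub=2$ and let $\widehat{\complexAvgWeightimgSelect}$ be supported in $[\pi-0.2,\,\pi]\times[-0.1,\,0.1]$: this lies in the half-plane $\{\xi_1\geq 0\}$, has side $0.2\leq\pi$, and does not contain the origin, yet it contains the lattice point $(\pi,0)$, where $\widehat{\avgWeightimgSelect}$ need not vanish; for a generic input $\colormiximg$ the sum \eqref{eq:expval} is then nonzero. So the conclusion genuinely requires importing information absent from the statement, namely where the \dtcwpt tile sits relative to $\frac{2\pi}{\sub}\mathZ^2$ --- each tile is concentrated in an open cell $\bigl(k\pi/\sub,(k+1)\pi/\sub\bigr)\times\bigl(l\pi/\sub,(l+1)\pi/\sub\bigr)$, which contains no even multiple of $\pi/\sub$, and the same holds for its reflection. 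Your plan to ``exclude this configuration using the specific position of \dtcwpt tiles'' is exactly the missing ingredient, but it is an additional hypothesis, not a consequence of the stated ones; this is consistent with the paper's own caveat that the conditions of \cref{prop:expval} are only approximately met in practice.

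Two smaller points. First, a closed square of side exactly $2\pi/\sub$ can contain up to four lattice points (its corners), so your claim that each of $\Omega$ and $-\Omega$ contains ``at most one lattice point'' needs either the strict inequality $\discretefilterSupportsize<2\pi/\sub$ or a half-open convention for the region. Second, since the paper's proof is only a pointer to Theorem~1 of \cite{Leterme2022c}, I cannot certify that your route differs from the authors'; but as written your argument does not close, and no argument can close it without strengthening the hypotheses as above.
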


\begin{IEEEproof}
    This proposition takes advantage of Shannon's sampling theorem. A similar reasoning can be found in the proof of Theorem~2.9 in \cite{Leterme2023}.
\end{IEEEproof}

\vspace{5pt}

In practice, the power spectrum of \dtcwpt filters cannot be exactly zero on regions with nonzero measure, since they are finitely supported. However, we can reasonably assume that it is concentrated within a region of size $\pi / 2^{\depth - 1} = \pi / \sub$. Therefore, since we have discarded low-pass filters, the conditions of \cref{prop:expval} are approximately met for $\complexAvgWeightimgSelect$.

We now assume that \eqref{eq:expval} is satisfied. Moreover, we assume that $\Expval[\outimgSelect[\vectorindex]]$ is constant for any $\vectorindex \in \supp(\outimgSelect)$. Aside from boundary effects, this is true if $\Expval[\colormiximg[\vectorindex]]$ is constant for any $\vectorindex \in \supp(\colormiximg)$.
This property is a rough approximation for images of natural scenes or man-made objects. In practice, the main subject is generally located at the center, the sky at the top, \etc. These are sources of variability for color and luminance distributions across images, as discussed in \cite{Torralba2003}.

We then get, for any $\vectorindex \in \mathZ^2$, $\Expval[\outimgSelect[\vectorindex]] = 0$.
Therefore, interchanging max pooling and ReLU yields the normalized version of \eqref{eq:convmaxpoolrelu}:
\begin{equation}
    \actRmaximgSelect = \relu\left(
        \frac{
            \rmaximgSelect
        }{
            \sqrt{\empiricalmeanSub[\outimgSelect^2] + \varepsilon}
        } + \biasvalSelect
    \right).
\label{eq:convmaxpoolbnrelu}
\end{equation}

As in \cref{subsec:proposedmodels_principle}, we replace $\rmaximgSelect$ by $\cmodimgSelect$ for any Gabor channel $\selectOutchannel \in \gaborchannels$, which yields the normalized version of \eqref{eq:convmodulusrelu}:
\begin{equation}
    \actCmodimgSelect := \relu\left(
        \frac{
            \cmodimgSelect
        }{
            \sqrt{\empiricalmeanSub[\outimgSelect^2] + \varepsilon}
        } + \biasvalSelect
    \right).
\label{eq:convmodulusbnrelu}
\end{equation}

Implementing \eqref{eq:convmodulusbnrelu} as a deep learning architecture is cumbersome because $\outimgSelect$ needs to be explicitly computed and kept in memory, in addition to $\cmodimgSelect$. Instead, we want to express the second-order moment $\empiricalmeanSub[\outimgSelect^2]$ (in the denominator) as a function of $\cmodimgSelect$. To this end, we state the following proposition.

\vspace{5pt}

\begin{proposition}
    If we restrict the conditions of \cref{prop:expval} to $\discretefilterSupportsize \leq \pi / \sub$, we have
    \begin{equation}
        \normtwo{\outimgSelect}^2 = 2\bignormtwo{\cmodimgSelect}^2.
    \label{eq:var}
    \end{equation}
\label{prop:var}
\end{proposition}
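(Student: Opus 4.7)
The plan is to express both $\outimgSelect$ and $\cmodimgSelect$ in terms of the single complex-valued signal $f := \colormiximg \star \complexAvgWeightimgSelect$, and then apply Parseval's identity together with Shannon's sampling theorem in the Fourier domain. From \eqref{eq:dtcwpt_real} combined with \eqref{eq:gaborfilter0}, we have $\outimgSelect = \Real(f) \downarrow \sub$, while by definition $\cmodimgSelect = |f \downarrow 2\sub|$, so that $\normtwo{\cmodimgSelect}^2 = \normtwo{f \downarrow 2\sub}^2$ since the modulus is pointwise. Because $\widehat{f} = \widehat{\colormiximg} \cdot \widehat{\complexAvgWeightimgSelect}$, the spectrum of $f$ is supported in the same $\kappa \times \kappa$ box $B \subset \mpipi^2$ as $\widehat{\complexAvgWeightimgSelect}$, and $B$ lies in the half-plane $\{\xi_1 > 0\}$ since $\complexAvgWeightimgSelect$ is analytic.

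First I would verify that the hypothesis $\kappa \leq \pi/\sub$ rules out aliasing for both subsampling operations. For $f \downarrow 2\sub$, the Nyquist condition reads bandwidth $\leq 2\pi/(2\sub) = \pi/\sub$, which is exactly the stated bound. For $\Real(f) \downarrow \sub$, the spectrum $\widehat{\Real f}(\bxi) = \tfrac12(\widehat f(\bxi) + \overline{\widehat f(-\bxi)})$ is supported in the disjoint union $B \cup (-B)$; the total bandwidth along each axis is therefore at most $2\kappa \leq 2\pi/\sub$, exactly the Nyquist threshold for subsampling by $\sub$. Under these two aliasing-free regimes, Parseval's identity yields
\begin{equation}
\normtwo{f \downarrow 2\sub}^2 = \tfrac{1}{(2\sub)^2}\,\normtwo{f}^2 \qqand \normtwo{\Real(f) \downarrow \sub}^2 = \tfrac{1}{\sub^2}\,\normtwo{\Real(f)}^2.
\end{equation}

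Next I would relate $\normtwo{\Real(f)}$ to $\normtwo{f}$. Since $B$ and $-B$ are disjoint, splitting the Parseval integral of $\bigl|\widehat{\Real f}\bigr|^2$ over these two pieces gives $\normtwo{\Real(f)}^2 = \tfrac12 \normtwo{f}^2$, which is also the familiar fact that an analytic signal has twice the energy of its real part (the Hilbert transform is an $L^2$-isometry). Combining the three identities produces $\normtwo{\outimgSelect}^2 = \normtwo{f}^2/(2\sub^2)$ and $\normtwo{\cmodimgSelect}^2 = \normtwo{f}^2/(4\sub^2)$, whose ratio is exactly $2$.

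The main obstacle is the careful bookkeeping of the aliasing-free condition in the $\Real(f) \downarrow \sub$ step: one must check that the two bands $B$ and $-B$ not only individually fit into a $2\pi/\sub$-period window but also do not interfere with one another's periodic copies after the factor-$\sub$ folding of the Fourier domain. This is where the strengthened hypothesis $\kappa \leq \pi/\sub$, compared with the weaker $\kappa \leq 2\pi/\sub$ of \cref{prop:expval}, becomes essential, because it is the aggregate bandwidth $2\kappa$, not $\kappa$ alone, that must respect the Nyquist rate of subsampling by $\sub$.
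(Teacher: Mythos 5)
Your overall strategy (Parseval plus Shannon, splitting the spectrum of $\Real f$ into the analytic band $B$ and its mirror $-B$) is exactly the route the paper intends --- its own ``proof'' is only a pointer to Shannon's theorem and to Proposition~3 of the cited companion paper --- and your first Parseval identity, $\normtwo{f \downarrow 2\sub}^2 = \tfrac{1}{4\sub^2}\normtwo{f}^2$, is correctly justified: a single $\discretefilterSupportsize\times\discretefilterSupportsize$ box with $\discretefilterSupportsize \le \pi/\sub$ cannot overlap its own translates by the lattice $\tfrac{2\pi}{2\sub}\mathZ^2$. The gap is in the second identity. You justify the absence of aliasing for $\Real(f)\downarrow\sub$ by saying the ``total bandwidth'' of $B\cup(-B)$ is $2\discretefilterSupportsize \le 2\pi/\sub$, but aggregate bandwidth is not the relevant quantity: $B$ and $-B$ are two separated boxes whose positions matter. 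Aliasing occurs as soon as some translate $-B + \tfrac{2\pi}{\sub}\bk$ meets $B$, which happens precisely when $\tfrac{2\pi}{\sub}\bk \in B + B$; since $B+B$ is a box of side $2\discretefilterSupportsize$, which may be as large as $2\pi/\sub$, this condition is \emph{not} excluded by the hypothesis $\discretefilterSupportsize\le\pi/\sub$. Concretely, if the filter's center frequency $\bnu$ lies near a point of the half-Nyquist lattice $\tfrac{\pi}{\sub}\mathZ^2$, the folded copy of the negative band lands on the positive band; writing $g := f\downarrow\sub$, one finds
\begin{equation}
    \normtwo{\Real g}^2 = \tfrac12\normtwo{g}^2 + \tfrac12\,\Real\!\sum_{\bn\in\mathZ^2} g[\bn]^2 ,
\end{equation}
and the cross term $\Real\sum_\bn g[\bn]^2$ vanishes exactly when $\supp\fouriertransf{g}$ and $-\supp\fouriertransf{g}$ are disjoint on the torus --- which is the condition your bandwidth count fails to establish. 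When it does not vanish, \eqref{eq:var} is genuinely false; these are the ``filter frequencies regularly scattered across the Fourier domain'' for which the paper's introduction already warns that \rmax and \cmod diverge.

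So the missing ingredient is an additional localization hypothesis: the center frequency $\bnu$ of $\complexAvgWeightimgSelect$ must stay at distance at least $\discretefilterSupportsize/2$ (in sup-norm) from every point of $\tfrac{\pi}{\sub}\mathZ^2$, or equivalently $(B+B)\cap\tfrac{2\pi}{\sub}\mathZ^2 = \emptyset$. With that assumption your argument closes correctly, and it is how the cited Proposition~3 of \cite{Leterme2022c} handles the matter; as written, however, the step $\normtwo{\Real(f)\downarrow\sub}^2 = \tfrac{1}{\sub^2}\normtwo{\Real f}^2$ does not follow from $\discretefilterSupportsize\le\pi/\sub$ alone. (The same caveat silently affects \cref{prop:expval}, whose conclusion also requires $B$ to avoid the lattice $\tfrac{2\pi}{\sub}\mathZ^2$; the paper sweeps both under the ``approximately met'' rug.)
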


\begin{IEEEproof}
    This result, once again, takes advantage of Shannon's sampling theorem. The proof of our Proposition~2.10 in \cite{Leterme2023} is based on similar arguments.
\end{IEEEproof}

\vspace{5pt}

As for \cref{prop:expval}, the conditions of \cref{prop:var} are approximately met. We therefore assume that \eqref{eq:var} is satisfied, and \eqref{eq:convmodulusbnrelu} becomes
\begin{equation}
    \actCmodimgSelect := \relu\left(
        \frac{
            \cmodimgSelect
        }{
            \sqrt{\frac12\empiricalmeanTwosub[{\cmodimgSelect}^2] + \varepsilon}
        } + \biasvalSelect
    \right).
\label{eq:convmodulusbnrelubis}
\end{equation}
In the case of ResNet, the bias layer (Bias) is therefore preceded by a batch normalization layer without mean centering satisfying \eqref{eq:convmodulusbnrelubis}, which we call \cmodbn. The second-order moment of ${\cmodimgSelect}$ is computed on feature maps which are twice smaller than $\outimgSelect$ in both directions (hence the index ``$2\sub$'' in \eqref{eq:convmodulusbnrelubis}), which is the subsampling factor for the \cmod operator.

\section{Implementation Details}
\label{sec:appendix_impl_details}

In this section, we provide further information that complements the experimental details presented in \cref{subsec:experiments_details} of the main paper.

\subsection{Subsampling Factor and Decomposition Depth}

As explained in \cref{subsec:proposedmodels_baseline}, the decomposition depth $\depth$ is chosen such that $\sub = 2^{\depth-1}$ (subsampling factor). Since $\sub = 4$ in AlexNet and $2$ in ResNet, we get $\depth = 3$ and $2$, respectively (see \cref{tab:exp_details}). Therefore, the number of dual-tree filters $\dtchannels := 2 \times 4^\depth$ is equal to $128$ and $32$, respectively.

\subsection{Number of Freely-Trained and Gabor Channels}

The split $\outchannelsLow$-$\outchannelsHigh$ between the freely-trained and Gabor channels, provided in the last row of \cref{tab:exp_details}, have been empirically determined from the standard models. More specifically, considering standard AlexNet and ResNet-34 trained on ImageNet (see \cref{subfig:convkernels_ai,subfig:convkernels_ri}, respectively), we determined the characteristics of each convolution kernel: frequency, orientation, and coherence index (which indicates whether an orientation is clearly defined). This was done by computing the \emph{tensor structure} \cite{Jahne2004}. Then, by applying proper thresholds, we isolated the Gabor-like kernels from the others, yielding the approximate values of $\outchannelsLow$ and $\outchannelsHigh$. Furthermore, this procedure allowed us to draw a rough estimate of the distribution of the Gabor-like filters in the Fourier domain, which was helpful to design the mapping scheme shown in \cref{fig:filtergrouping}, as explained below.

\begin{table}
    \caption{Experimental settings for our twin models}
    \renewcommand{\arraystretch}{1.}
    \centering\small
    \begin{tabular}{r|cc}
        \hline

        & \textbf{\wavealexnet} & \textbf{\waveresnet} \\

        \hline

        $\sub$ (subsampling factor) & $4$ & $2$ \\

        $\depth$ (decomposition depth) & $3$ & $2$ \\

        $\outchannelsLow,\, \outchannelsHigh$ (output channels) & $32,\, 32$ & $40,\, 24$ \\

        \hline
    \end{tabular}
    \vspace{-0pt}
    \label{tab:exp_details}
\end{table}

\subsection{Filter Selection and Grouping}

We then manually selected $\dtchannelsSubset < \dtchannels$ filters, used in \eqref{eq:permut}. In particular, we removed the two low-pass filters, which are outside the scope of our theoretical study. Besides, for computational reasons, in \wavealexnet we removed $32$ ``extremely'' high-frequency filters which are clearly absent from the standard model (see \cref{subfig:filtergrouping_alexnet}). Finally, in \waveresnet we removed the $14$ filters whose bandwidths outreach the boundaries of the Fourier domain $\mpipi^2$ (see \cref{subfig:filtergrouping_resnet}). These filters indeed have a poorly-defined orientation, since a small fraction of their energy is located at the far end of the Fourier domain \cite[see Fig.~1, ``Proposed \dtcwpt'']{Bayram2008}. Therefore, they somewhat exhibit a checkerboard pattern.%
\footnote{
    Note that the same procedure could have been applied to \wavealexnet, but it was deemed unnecessary because the boundary filters were spontaneously discarded during training.
}

\begin{figure}
    \centering
    \begin{subfigure}{0.6\columnwidth}
        \centering
        \includegraphics[width=\columnwidth]{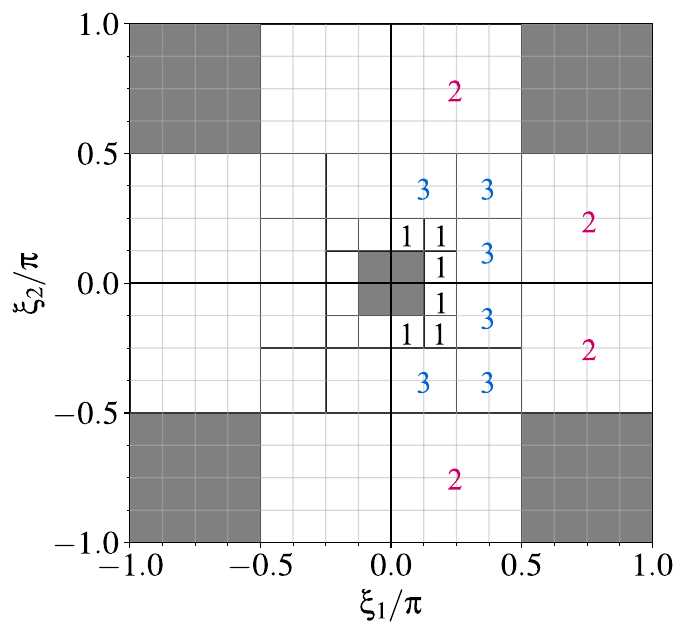}
        \caption{
            \wavealexnet ($\depth = 3$)
        }
    \label{subfig:filtergrouping_alexnet}
    \end{subfigure}
    \hspace{30pt}
    \begin{subfigure}{0.6\columnwidth}
        \centering
        \includegraphics[width=\columnwidth]{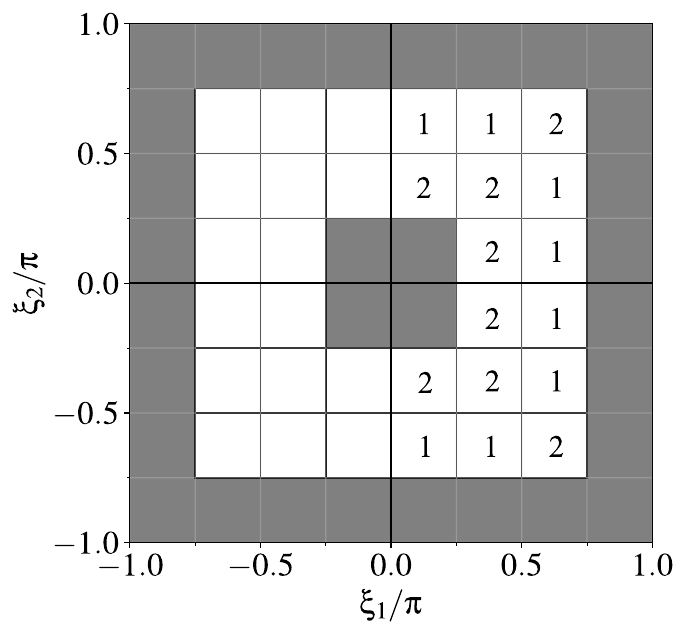}
        \caption{
            \waveresnet ($\depth = 2$)
        }
        \label{subfig:filtergrouping_resnet}
    \end{subfigure}
    \caption{
        Mapping scheme from \dtcwpt feature maps $\dtmultimg \in \twodseqs^{\dtchannels}$ to the wavelet block's output $\outmultimgHigh \in \twodseqs^{\outchannelsHigh}$. Each wavelet feature map is symbolized by a small square in the Fourier domain, where its energy is mainly located. The gray areas show the feature maps which have been manually removed. Elsewhere, each group of feature maps $\dtmultimgPermGroup \in \twodseqs^{\inchannelsGroup}$ is symbolized by a dark frame---in (b), $\inchannelsGroup$ is always equal to $1$. For each group $\selectGroup \in \setof{\ngroups}$, a number indicates how many output channels $\outchannelsGroup$ are assigned to it. The colored numbers in (a) refer to groups on which we have applied $l^\infty / l^1$-regularization. Note that, when inputs are real-valued, only the half-plane of positive \xcoord-values is considered.
    }
    \label{fig:filtergrouping}
\end{figure}

As explained in \cref{sec:wcnn_techdetails}, once the \dtcwpt feature maps have been manually selected, the output $\dtmultimgPerm \in \twodseqs^{\dtchannelsSubset}$ is sliced into $\ngroups$ groups of channels $\dtmultimgPermGroup \in \twodseqs^{\inchannelsGroup}$. For each group $\selectGroup$, a depthwise linear mapping from $\dtmultimgPermGroup$ to a bunch of output channels $\outmultimgGroup \in \twodseqs^{\outchannelsGroup}$ is performed. Finally, the wavelet block's output feature maps $\outmultimgHigh \in \twodseqs^{\outchannelsHigh}$ are obtained by concatenating the outputs $\outmultimgGroup$ depthwise, for any $\selectGroup \in \setof{\ngroups}$. \Cref{fig:filtergrouping} shows how the above grouping is made, and how many output channels $\outchannelsGroup$ each group $\selectGroup$ is assigned to.

During training, the above process aims at selecting one single \dtcwpt feature map among each group. This is achieved through mixed-norm $l^\infty / l^1$ regularization, as introduced in \eqref{eq:regobjfun}. The regularization hyperparameters $\regparamGroup$ have been chosen empirically. If they are too small, then regularization will not be effective. On the contrary, if they are too large, then the regularization term will become predominant, forcing the trainable parameter vector $\featmapmixvecGroupSelect$ to randomly collapse to $0$ except for one element. The chosen values of $\regparamGroup$ are displayed in \cref{tab:regparams}, for each group $\selectGroup$ of \dtcwpt feature maps. The groups with only one feature map do not need any regularization since this feature map is automatically selected. The second and third rows of \wavealexnet correspond to the blue and magenta groups in \cref{subfig:filtergrouping_alexnet}, respectively.

\begin{table}
    \caption{Regularization hyperparameters}
    \renewcommand{\arraystretch}{1.2}
    \centering\small
    \begin{tabular}{c|cr}
        \hline

        \multicolumn{1}{c|}{\textbf{Model}} & \multicolumn{1}{c}{\textbf{Filt.\@ frequency}} & \multicolumn{1}{c}{\textbf{Reg.\@ param.}} \\

        \hline

        \multirow{3}{*}{\wavealexnet} & $\intervalexclr{\pi / 8}{\pi / 4}$ & -- \\

        & $\intervalexclr{\pi / 4}{\pi / 2}$ & $4.1 \cdot 10^{-3}$ \\

        & $\intervalexclr{\pi / 2}{\pi}$ & $3.2 \cdot 10^{-4}$ \\

        \hline

        \waveresnet & \multicolumn{1}{c}{any} & -- \\

        \hline
    \end{tabular}
    \label{tab:regparams}
\end{table}

\subsection{Benchmark against Blur-Pooling-based Approaches}

As mentioned in \cref{subsec:proposedmodels_antialiasing}, we compare blur-pooling-based antialiasing approach with ours. To apply static or adaptive blur pooling to the \wavecnns, we proceed as follows. Following Zhang's implementation, the wavelet block is not antialiased if $\sub = 2$ as in ResNet, for computational reasons. However, when $\sub = 4$ as in AlexNet, a blur pooling layer is placed after ReLU, and the wavelet block's subsampling factor is divided by $2$. 
Moreover, max pooling is replaced by max-blur pooling.
The size of the blurring filters is set to $3$, as recommended by Zhang \cite{Zhang2019}.

\section{Accuracy vs Consistency: Additional Plots}
\label{sec:appendix_accuracy_vs_consistency}

\Cref{fig:tradeoff_resnet} shows the relationship between consistency and prediction accuracy of AlexNet and ResNet-based models on ImageNet, for different filter sizes ranging from $1$ (no blur pooling) to $7$ (heavy loss of high-frequency information). The data for AlexNet on the validation set are displayed in the main document, \cref{fig:tradeoff}.
As recommended by Zhang \cite{Zhang2019}, the optimal trade-off is generally achieved when the blurring filter size is equal to $3$. Moreover, in either case, at equivalent level of consistency, replacing blur pooling by our \cmod-based antialiasing approach in the Gabor channels increases accuracy.

\begin{figure}
    \centering
    \begin{subfigure}{0.95\columnwidth}
        \centering
        \includegraphics[width=\columnwidth]{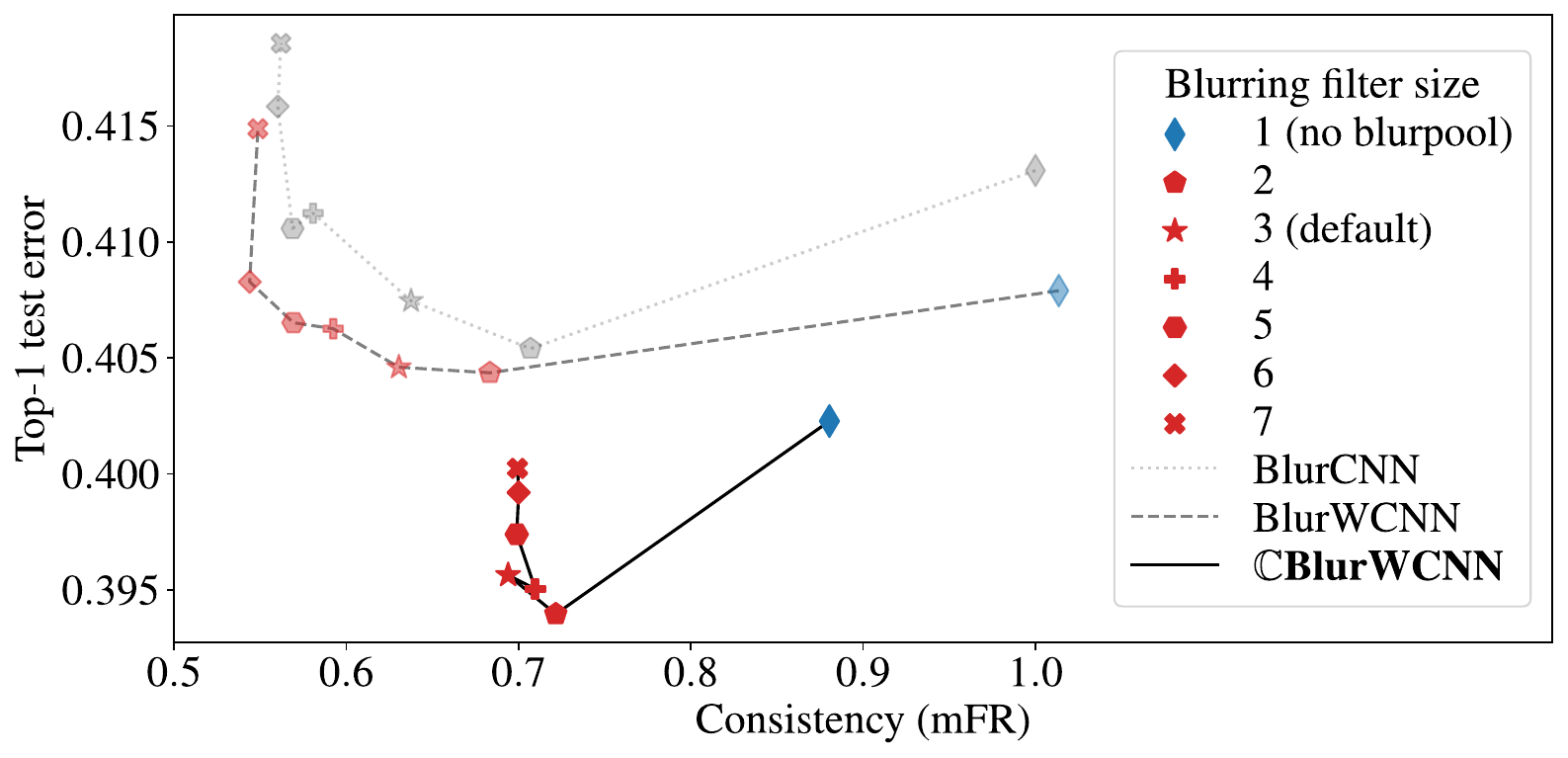}
        \caption{AlexNet, test set ($50$K images)}
        \vspace{10pt}
    \end{subfigure}
    \begin{subfigure}{0.95\columnwidth}
        \centering
        \includegraphics[width=\columnwidth]{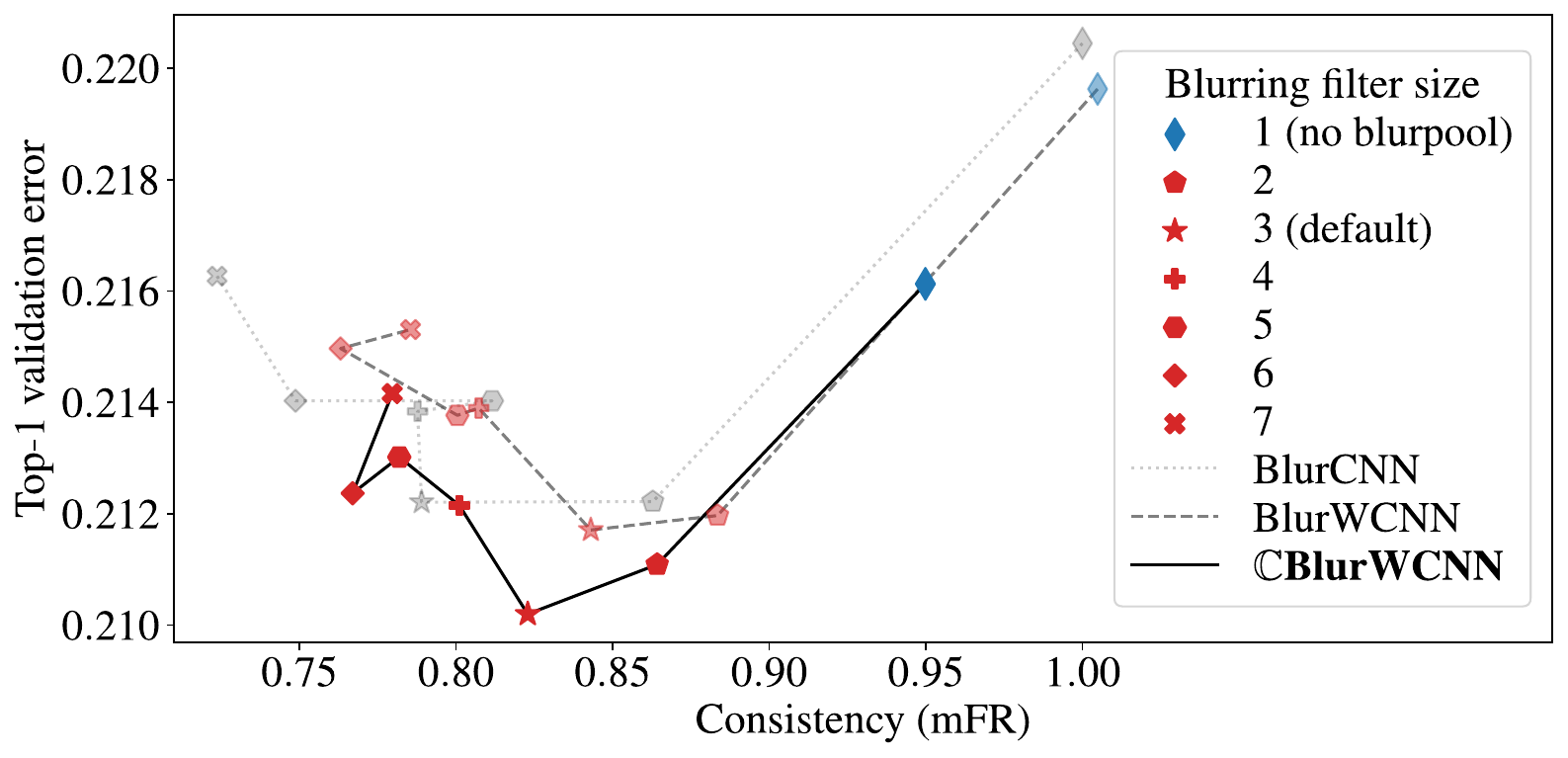}
        \caption{ResNet-34, validation set ($100$K images)}
        \vspace{10pt}
    \end{subfigure}
    \begin{subfigure}{0.95\columnwidth}
        \centering
        \includegraphics[width=\columnwidth]{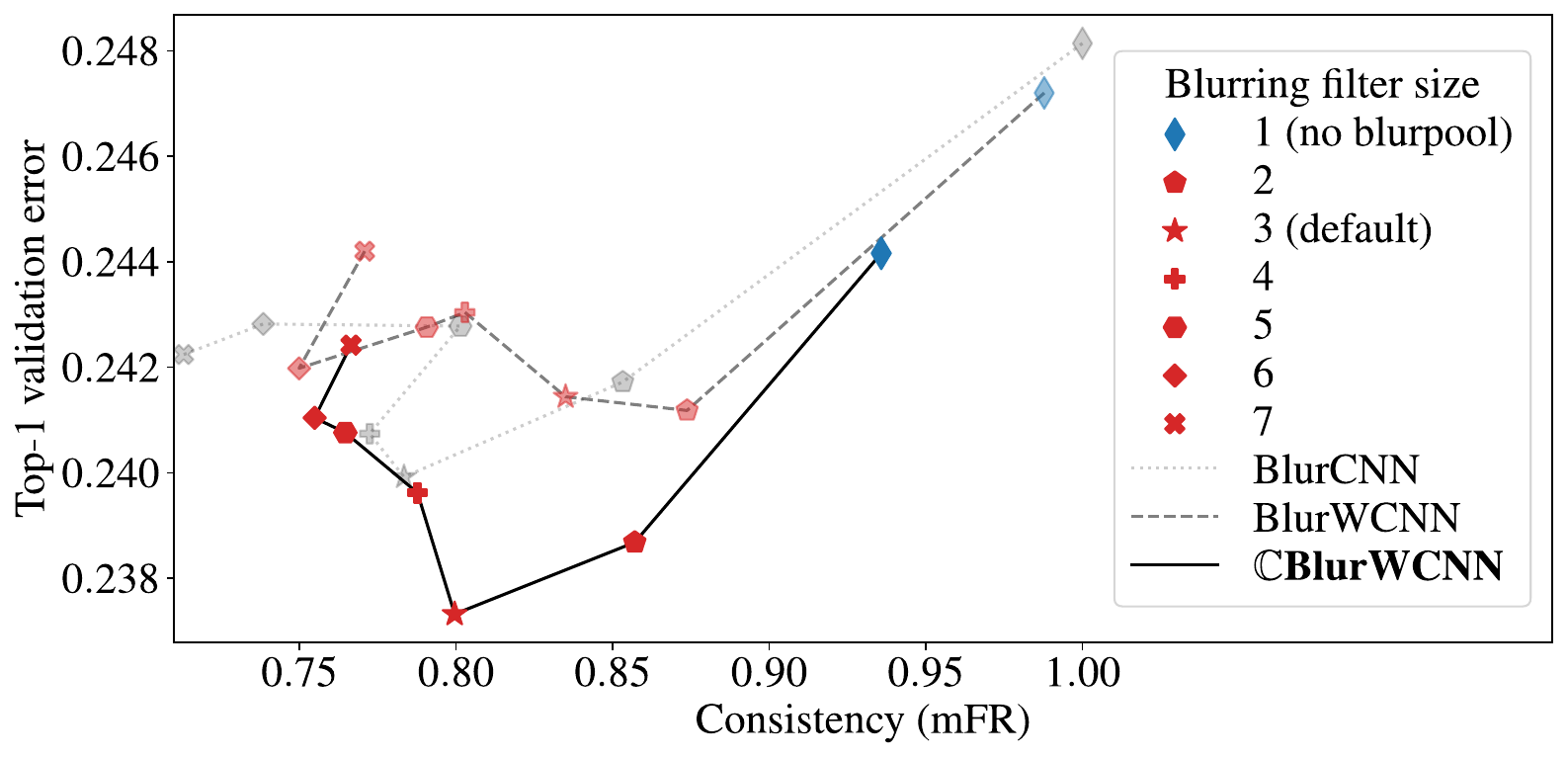}
        \caption{ResNet-34, test set ($50$K images)}
    \end{subfigure}
    \caption{
        Classification accuracy (ten-crops) vs consistency, measuring the stability of predictions to small input shifts (the lower the better for both axes). The metrics have been computed on ImageNet-1K, on both validation set ($100$K images set aside from the training set) and test set ($50$K images provided as a separate dataset). For each model (\blurcnn, \blurwavecnn and \cblurwavecnn), we increased the blurring filter size from $1$ (\ie, no blur pooling) to $7$. The blue diamonds (no blur pooling) and red stars (blur pooling with filters of size $3$) correspond to the models for which evaluation metrics have been reported in \cref{table:results_imagenet} (models trained after $90$ epochs).
    }
    \label{fig:tradeoff_resnet}
\end{figure}

\section{Computational cost}
\label{sec:appendix_computational_costs}

This section provides technical details about our estimation of the computational cost (FLOPs), such as reported in \cref{table:complexity}, for \emph{one input image} and \emph{one Gabor channel}. This metric was estimated in the case of standard 2D convolutions.



\subsection{Average Computation Time per Operation}

The following values have been determined experimentally using PyTorch (CPU computations). They have been normalized with respect to the computation time of an addition.

\begin{align*}
    \timeSum &= 1.0 \quad \mbox{(addition);} \\
    \timeProd &= 1.0 \quad \mbox{(multiplication);} \\
    \timeExp &= 0.75 \quad \mbox{(exponential);} \\
    \timeMod &= 3.5 \quad \mbox{(modulus);} \\
    \timeReLU &= 0.75 \quad \mbox{(ReLU);} \\
    \timeMax &= 12.0 \quad \mbox{(max pooling).}
\end{align*}

\subsection{Computational Cost per Layer}

In the following paragraphs, $\outchannels \in \nonzeroMathN$ denotes the number of output channels (depth) and $\imgsizebis \in \nonzeroMathN$ denotes the size of output feature maps (height and width). However, note that $\imgsizebis$ is not necessary the same for all layers. For instance, in standard ResNet, the output of the first convolution layer is of size $\imgsizebis = 112$, whereas the output of the subsequent max pooling layer is of size $\imgsizebis = 56$. For each type of layer, we calculate the number of FLOPs required to produce a single output channel $\selectOutchannelInrange$. Moreover, we assume, without loss of generality, that the model processes one input image at a time.

\paragraph{Convolution Layers}

Inputs of size $(\inchannels \times \imgsize \times \imgsize)$ (input channels, height and width); outputs of size $(\outchannels \times \imgsizebis \times \imgsizebis)$.
For each output unit, a convolution layer with kernels of size $(\filtersize \times \filtersize)$ requires $\inchannels \filtersize^2$ multiplications and $\inchannels \filtersize^2 - 1$ additions. Therefore, the computational cost per output channel is equal to
\begin{equation}
    \TimeConv = {\imgsizebis}^2 \left(
        (\inchannels \filtersize^2 - 1) \cdot \timeSum + \inchannels \filtersize^2 \cdot \timeProd
    \right).
\end{equation}

\paragraph{Complex Convolution Layers}

Inputs of size $(\inchannels \times \imgsize \times \imgsize)$; complex-valued outputs of size $(\outchannels \times \imgsizebis \times \imgsizebis)$.
For each output unit, a complex-valued convolution layer requires $2 \times \inchannels \filtersize^2$ multiplications and $2 \times (\inchannels \filtersize^2 - 1)$ additions. Computational cost per output channel:
\begin{equation}
    \TimeCConv = 2 {\imgsizebis}^2 \left(
        (\inchannels \filtersize^2 - 1) \cdot \timeSum + \inchannels \filtersize^2 \cdot \timeProd
    \right).
\end{equation}
Note that, in our implementations, the complex-valued convolution layers are less expensive than the real-valued ones, because the output size $\imgsizebis$ is twice smaller, due to the larger subsampling factor.

\paragraph{Bias and ReLU}

Inputs and outputs of size $(\outchannels \times \imgsizebis \times \imgsizebis)$. One evaluation for each output unit:

\begin{equation}
    \TimeBias = {\imgsizebis}^2 \, \timeSum
    \qqand
    \TimeReLU = {\imgsizebis}^2 \, \timeReLU.
\end{equation}

\paragraph{Max Pooling}

Outputs of size $(\outchannels \times \imgsizebis \times \imgsizebis)$, with $\imgsizebis$ depending on whether subsampling is performed at this stage (no subsampling when followed by a blur pooling layer). One evaluation for each output unit:

\begin{equation}
    \TimeMax = {\imgsizebis}^2 \, \timeMax.
\end{equation}

\paragraph{Modulus Pooling}

Complex-valued inputs and real-valued outputs of size $(\outchannels \times \imgsizebis \times \imgsizebis)$. One evaluation for each output unit:

\begin{equation}
    \TimeMod = {\imgsizebis}^2 \, \timeMod.
\end{equation}

\paragraph{Batch Normalization}

Inputs and outputs of size $(\outchannels \times \imgsizebis \times \imgsizebis)$. A batch normalization (BN) layer, described in \eqref{eq:convbnrelumaxpool}, can be split into several stages.
\begin{enumerate}
    \item Mean: ${\imgsizebis}^2$ additions.
    \item Standard deviation: ${\imgsizebis}^2$ multiplications, ${\imgsizebis}^2$ additions (second moment), ${\imgsizebis}^2$ additions (subtract squared mean).
    \item Final value: ${\imgsizebis}^2$ additions (subtract mean), $2{\imgsizebis}^2$ multiplications (divide by standard deviation and multiplicative coefficient).
\end{enumerate}
Overall, the computational cost per image and output channel of a BN layer is equal to
\begin{equation}
    \TimeBN = {\imgsizebis}^2 \left(
        4 \, \timeSum + 3 \, \timeProd
    \right).
\end{equation}

\paragraph{Static Blur Pooling}

Inputs of size $(\outchannels \times 2\imgsizebis \times 2\imgsizebis)$; outputs of size $(\outchannels \times \imgsizebis \times \imgsizebis)$.
For each output unit, a static blur pooling layer \cite{Zhang2019} with filters of size $(\blurfiltsize \times \blurfiltsize)$ requires $\blurfiltsize^2$ multiplications and $\blurfiltsize^2 - 1$ additions. The computational cost per output channel is therfore equal to
\begin{equation}
    \TimeBlur = {\imgsizebis}^2 \left(
        (\blurfiltsize^2 - 1) \cdot \timeSum + \blurfiltsize^2 \cdot \timeProd
    \right).
\end{equation}

\paragraph{Adaptive Blur Pooling}

Inputs of size $(\outchannels \times 2\imgsizebis \times 2\imgsizebis)$; outputs of size $(\outchannels \times \imgsizebis \times \imgsizebis)$.
An adaptive blur pooling layer \cite{Zou2023} with filters of size $(\blurfiltsize \times \blurfiltsize)$ splits the $\outchannels$ output channels into $\ngroups := \outchannels / \outchannelsPergroup$ groups of $\outchannelsPergroup$ channels that share the same blurring filters. The adaptive blur pooling layer can be decomposed into the following stages.
\begin{enumerate}
    \item Generation of blurring filters using a convolution layer with trainable kernels of size $(\blurfiltsize \times \blurfiltsize)$: inputs of size $(\outchannels \times 2\imgsizebis \times 2\imgsizebis)$, outputs of size $(\ngroups\blurfiltsize^2 \times \imgsizebis \times \imgsizebis)$. For each output unit, this stage requires $\outchannels \blurfiltsize^2$ multiplications and $\outchannels \blurfiltsize^2 - 1$ additions. The computational cost divided by the number $\outchannels$ of channels is therefore equal to
    \begin{equation}
        \Computtime_{\idxconv\idxadablur} = {\imgsizebis}^2 \, \frac{\blurfiltsize^2}{\outchannelsPergroup} \left(
            (\outchannels \blurfiltsize^2 - 1) \cdot \timeSum + \outchannels \blurfiltsize^2 \cdot \timeProd
        \right).
    \end{equation}
    Note that, despite being expressed on a per-channel basis, the above computational cost depends on the number $\outchannels$ of output channels. This is due to the asymptotic complexity of this stage in $\bigO(\outchannels^2)$.

    \item Batch normalization, inputs and outputs of size $(\ngroups\blurfiltsize^2 \times \imgsizebis \times \imgsizebis)$:
    \begin{equation}
        \Computtime_{\idxbn\idxadablur} = {\imgsizebis}^2 \, \frac{\blurfiltsize^2}{\outchannelsPergroup} \left(
            4 \, \timeSum + 3 \, \timeProd
        \right).
    \end{equation}

    \item Softmax along the depthwise dimension:
    \begin{equation}
        \Computtime_{\idxsoftmax\idxadablur} = {\imgsizebis}^2 \, \frac{\blurfiltsize^2}{\outchannelsPergroup} (\timeExp + \timeSum + \timeProd).
    \end{equation}

    \item Blur pooling of input feature maps, using the filter generated at stages (1)--(3): inputs of size $(\outchannels \times 2\imgsizebis \times 2\imgsizebis)$, outputs of size $(\outchannels \times \imgsizebis \times \imgsizebis)$. The computational cost per output channel is identical to the static blur pooling layer, even though the weights may vary across channels and spatial locations:
    \begin{equation}
        \TimeBlur = {\imgsizebis}^2 \left(
            (\blurfiltsize^2 - 1) \cdot \timeSum + \blurfiltsize^2 \cdot \timeProd
        \right).
    \end{equation}
\end{enumerate}
Overall, the computational cost of an adaptive blur pooling layer per input image and output channel is equal to
\begin{multline}
    \TimeAdablur = {\imgsizebis}^2 \, \frac{\blurfiltsize^2}{\outchannelsPergroup} \left[
        \left(
            (\outchannels + 1) \blurfiltsize^2 + 3
        \right) \cdot \timeSum
    \right. \\
    \left.
        + \left(
            (\outchannels + 1) \blurfiltsize^2 + 4
        \right) \cdot \timeProd
        + \timeExp
    \right].
\end{multline}
We notice that an adaptive blur pooling layer has an asymptotic complexity in $\bigO(\blurfiltsize^4)$, versus $\bigO(\blurfiltsize^2)$ for static blur pooling.

\subsection{Application to AlexNet- and ResNet-based Models}

Since they are normalized by the computational cost of standard models, the FLOPs reported in \cref{table:complexity} only depend on the size of the convolution kernels and blur pooling filters, respectively denoted by $\filtersize$ and $\blurfiltsize \in \nonzeroMathN$. In addition, the computational cost of the adaptive blur pooling layer depend on the number of output channels $\outchannels$ as well as the number of output channels per group $\outchannelsPergroup$.

In practice, $\filtersize$ is respectively equal to $11$ and $7$ for AlexNet- and ResNet-based models. Moreover, $\blurfiltsize = 3$, $\outchannels = 64$ and $\outchannelsPergroup = 8$. Actually, the computational cost is largely determined by the convolution layers, including step (1) of adaptive blur pooling.

\section{Memory Footprint}

This section provides technical details about our estimation of the memory footprint for \emph{one input image} and \emph{one output channel}, such as reported in \cref{table:complexity}. This metric is generally difficult to estimate, and is very implementation-dependent. Hereafter, we consider the size of the output tensors, as well as intermediate tensors saved by \texttt{torch.autograd} for the backward pass. However, we didn't take into account the tensors containing the trainable parameters.
To get the size of intermediate tensors, we used the Python package PyTorchViz.\footnote{%
    \url{https://github.com/szagoruyko/pytorchviz}
}
These tensors are saved according to the following rules.
\begin{itemize}
    \item Convolution (Conv), batch normalization (BN), Bias, max pooling (MaxPool or Max), blur pooling (BlurPool), and Modulus: the input tensors are saved, not the output. When Bias follows Conv or BN, no intermediate tensor is saved.
    \item ReLU, Softmax: the output tensors are saved, not the input.
    \item If an intermediate tensor is saved at both the output of a layer and the input of the next layer, its memory is not duplicated. An exception is Modulus, which stores the input feature maps as complex numbers.
    \item MaxPool or Max: a tensor of indices is kept in memory, indicating the position of the maximum values. The tensors are stored as 64-bit integers, so they weight twice as much as conventional float-32 tensors.
    \item BN: four 1D tensors of length $\outchannels$ are kept in memory: computed mean and variance, and running mean and variance. For BN0 \eqref{eq:convmodulusbnrelubis}, where the variance is not computed, only two tensors are kept in memory.
\end{itemize}

In the following paragraphs, we denote by $\outchannels$ the number of output channels, $\imgsize$ the size of input images (height and width), $\sub$ the subsampling factor of the baseline models ($4$ for AlexNet, $2$ for ResNet), $\blurfiltsize$ the blurring filter size (set to $3$ in practice). For each model, a table contains the size of all saved intermediate or output tensors. For example, the values associated to ``Layer1 $\to$ Layer2'' correspond to the depth (number of channel), height and width of the intermediate tensor between Layer1 and Layer2.

\subsection{AlexNet-based Models}

\paragraph{No Antialiasing}

$$\convlayer \to \biaslayer \to \relulayer \to \maxpoollayer.$$
\begin{center}
    \footnotesize
    \begin{tabular}{r|ccc}
        \hline
        ReLU $\to$ MaxPool & $\outchannels$ & $\frac{\imgsize}{\sub}$ & $\frac{\imgsize}{\sub}$ \\
        MaxPool $\to$ \textit{output} & $\outchannels$ & $\frac{\imgsize}{2\sub}$ & $\frac{\imgsize}{2\sub}$ \\
        MaxPool indices ($\times 2$) & $\outchannels$ & $\frac{\imgsize}{2\sub}$ & $\frac{\imgsize}{2\sub}$ \\
        \hline
    \end{tabular} 
\end{center}
The memory footprint for each output channel is equal to
\begin{equation*}
    \implies \TensorsizeStd = \frac74 \frac{\imgsize^2}{\sub^2}.
\end{equation*}

\paragraph{Static Blur Pooling}

$$\convlayer \to \biaslayer \to \relulayer \to \blurpoollayer \to \maxlayer \to \blurpoollayer.$$
\begin{center}
    \footnotesize
    \begin{tabular}{r|ccc}
        \hline
        ReLU $\to$ BlurPool & $\outchannels$ & $\frac{2\imgsize}{\sub}$ & $\frac{2\imgsize}{\sub}$ \\
        BlurPool $\to$ Max & $\outchannels$ & $\frac{\imgsize}{\sub}$ & $\frac{\imgsize}{\sub}$ \\
        Max $\to$ BlurPool & $\outchannels$ & $\frac{\imgsize}{\sub}$ & $\frac{\imgsize}{\sub}$ \\
        Max indices ($\times 2$) & $\outchannels$ & $\frac{\imgsize}{\sub}$ & $\frac{\imgsize}{\sub}$ \\
        BlurPool $\to$ \textit{output} & $\outchannels$ & $\frac{\imgsize}{2\sub}$ & $\frac{\imgsize}{2\sub}$ \\
        \hline
    \end{tabular} 
\end{center}
\begin{equation*}
    \implies \TensorsizeBlur = \frac{33}{4} \frac{\imgsize^2}{\sub^2}.
\end{equation*}

\paragraph{\cmod-based Approach}

$$\mathC\convlayer \to \moduluslayer \to \biaslayer \to \relulayer.$$
\begin{center}
    \footnotesize
    \begin{tabular}{r|ccc}
        \hline
        \complexConv $\to$ Modulus & $2\outchannels$ & $\frac{\imgsize}{2\sub}$ & $\frac{\imgsize}{2\sub}$ \\
        Modulus $\to$ Bias & $\outchannels$ & $\frac{\imgsize}{2\sub}$ & $\frac{\imgsize}{2\sub}$ \\
        ReLU $\to$ \textit{output} & $\outchannels$ & $\frac{\imgsize}{2\sub}$ & $\frac{\imgsize}{2\sub}$ \\
        \hline
    \end{tabular} 
\end{center}
\begin{equation*}
    \implies \TensorsizeCMod = \frac{\imgsize^2}{\sub^2}.
\end{equation*}

\subsection{ResNet-based Models}

\setcounter{paragraph}{0} 
\paragraph{No Antialiasing}

$$\convlayer \to \bnlayer \to \biaslayer \to \relulayer \to \maxpoollayer.$$
\begin{center}
    \footnotesize
    \begin{tabular}{r|ccc}
        \hline
        Conv $\to$ BN & $\outchannels$ & $\frac{\imgsize}{\sub}$ & $\frac{\imgsize}{\sub}$ \\
        BN metrics & $4\outchannels$ & -- & -- \\
        ReLU $\to$ MaxPool & $\outchannels$ & $\frac{\imgsize}{\sub}$ & $\frac{\imgsize}{\sub}$ \\
        MaxPool $\to$ \textit{output} & $\outchannels$ & $\frac{\imgsize}{2\sub}$ & $\frac{\imgsize}{2\sub}$ \\
        MaxPool indices ($\times 2$) & $\outchannels$ & $\frac{\imgsize}{2\sub}$ & $\frac{\imgsize}{2\sub}$ \\
        \hline
    \end{tabular} 
\end{center}
\begin{equation*}
    \implies \TensorsizeStd = \frac{11}{4} \frac{\imgsize^2}{\sub^2} + 4 \approx \frac{11}{4} \frac{\imgsize^2}{\sub^2}.
\end{equation*}

\paragraph{Static Blur Pooling}

$$\convlayer \to \bnlayer \to \biaslayer \to \relulayer \to \maxlayer \to \blurpoollayer.$$
\begin{center}
    \footnotesize
    \begin{tabular}{r|ccc}
        \hline
        Conv $\to$ BN & $\outchannels$ & $\frac{\imgsize}{\sub}$ & $\frac{\imgsize}{\sub}$ \\
        BN metrics & $4\outchannels$ & -- & -- \\
        ReLU $\to$ Max & $\outchannels$ & $\frac{\imgsize}{\sub}$ & $\frac{\imgsize}{\sub}$ \\
        Max $\to$ BlurPool & $\outchannels$ & $\frac{\imgsize}{\sub}$ & $\frac{\imgsize}{\sub}$ \\
        Max indices ($\times 2$) & $\outchannels$ & $\frac{\imgsize}{\sub}$ & $\frac{\imgsize}{\sub}$ \\
        BlurPool $\to$ \textit{output} & $\outchannels$ & $\frac{\imgsize}{2\sub}$ & $\frac{\imgsize}{2\sub}$ \\
        \hline
    \end{tabular}
\end{center}
\begin{equation*}
    \implies \TensorsizeBlur = \frac{21}{4} \frac{\imgsize^2}{\sub^2} + 4 \approx \frac{21}{4} \frac{\imgsize^2}{\sub^2}.
\end{equation*}

\paragraph{Adaptive Blur Pooling}

$$\convlayer \to \bnlayer \to \biaslayer \to \relulayer \to \maxlayer \to \adablurpoollayer.$$
\begin{center}
    \footnotesize
    \begin{tabular}{r|ccc}
        \hline
        Conv $\to$ BN & $\outchannels$ & $\frac{\imgsize}{\sub}$ & $\frac{\imgsize}{\sub}$ \\
        BN metrics & $4\outchannels$ & -- & -- \\
        ReLU $\to$ Max & $\outchannels$ & $\frac{\imgsize}{\sub}$ & $\frac{\imgsize}{\sub}$ \\
        Max $\to$ ABlurPool & $\outchannels$ & $\frac{\imgsize}{\sub}$ & $\frac{\imgsize}{\sub}$ \\
        Max indices ($\times 2$) & $\outchannels$ & $\frac{\imgsize}{\sub}$ & $\frac{\imgsize}{\sub}$ \\
        ABlurPool $\to$ \textit{output} & $\outchannels$ & $\frac{\imgsize}{2\sub}$ & $\frac{\imgsize}{2\sub}$ \\
        \hline
        \multicolumn{4}{c}{\textbf{Generate adaptive blurring filter}} \\
        \multicolumn{4}{c}{$\convlayer \to \bnlayer \to \biaslayer \to \softmaxlayer$} \\
        \hline
        Conv $\to$ BN & $\frac{\outchannels\blurfiltsize^2}{\outchannelsPergroup}$ & $\frac{\imgsize}{2\sub}$ & $\frac{\imgsize}{2\sub}$ \\
        BN metrics & $4\frac{\outchannels\blurfiltsize^2}{\outchannelsPergroup}$ & -- & -- \\
        Softmax $\to$ \textit{output} & $\frac{\outchannels\blurfiltsize^2}{\outchannelsPergroup}$ & $\frac{\imgsize}{2\sub}$ & $\frac{\imgsize}{2\sub}$ \\
        \hline
    \end{tabular} 
\end{center}
\begin{align*}
    \implies \TensorsizeAdablur
        &= \frac{21}{4} \frac{\imgsize^2}{\sub^2} + 4 + \frac{\blurfiltsize^2}{\outchannelsPergroup} \left(
        \frac{\imgsize^2}{2\sub^2} + 4
    \right) \\
    &\approx \frac{21}{4} \frac{\imgsize^2}{\sub^2} + \frac{\blurfiltsize^2}{\outchannelsPergroup}\frac{\imgsize^2}{2\sub^2}.
\end{align*}

\paragraph{\cmod-based Approach}

$$\mathC\convlayer \to \moduluslayer \to \bnlayer 0 \to \biaslayer \to \relulayer.$$
\begin{center}
    \footnotesize
    \begin{tabular}{r|ccc}
        \hline
        $\mathC\convlayer \to \moduluslayer$ & $2\outchannels$ & $\frac{\imgsize}{2\sub}$ & $\frac{\imgsize}{2\sub}$ \\
        $\moduluslayer \to \bnzeroerolayer$ & $\outchannels$ & $\frac{\imgsize}{2\sub}$ & $\frac{\imgsize}{2\sub}$ \\
        BN0 metrics & $2\outchannels$ & -- & -- \\
        $\relulayer \to$ \textit{output} & $\outchannels$ & $\frac{\imgsize}{2\sub}$ & $\frac{\imgsize}{2\sub}$ \\
        \hline
    \end{tabular} 
\end{center}
\begin{equation*}
    \implies \TensorsizeCMod = \frac{\imgsize^2}{\sub^2} + 2 \approx \frac{\imgsize^2}{\sub^2}.
\end{equation*}

\bibliographystyle{IEEEtran}
\bibliography{IEEEabrv,refs}

\end{document}